\documentclass{article}




\usepackage[nonatbib,final]{neurips_2022}


\usepackage[utf8]{inputenc} 
\usepackage[T1]{fontenc}    
\usepackage{hyperref}       
\usepackage{url}            
\usepackage{booktabs}       
\usepackage{amsfonts}       
\usepackage{nicefrac}       
\usepackage{microtype}      
\usepackage{xcolor}         
\usepackage{bm}
\usepackage{amsmath}
\usepackage{amsthm}
\usepackage{graphics}
\usepackage{graphicx}
\usepackage{subfigure}
\usepackage{multirow}
\usepackage{amssymb}
\usepackage{wrapfig}
\usepackage{float}
\usepackage{makecell}
\usepackage{diagbox}
\usepackage[linesnumbered,ruled]{algorithm2e}
\usepackage{booktabs}
\usepackage{color, colortbl}
\definecolor{Gray}{gray}{0.9}
\usepackage{capt-of}
\usepackage{varwidth}
\usepackage{enumerate}
\usepackage{slashbox}
\usepackage{threeparttable}

\newcommand{\KL}{\text{KL}}
\newcommand{\CE}{\text{CE}}
\newcommand{\IE}{\emph{i.e.}}
\newcommand{\EG}{\emph{e.g.}}

\newtheorem{theorem}{Theorem}
\newtheorem{lemma}{Lemma}
\newtheorem{definition}{Definition}
\definecolor{brilliantrose}{rgb}{1.0, 0.33, 0.64}

\title{SoLar: Sinkhorn Label Refinery for Imbalanced Partial-Label Learning}

%

\author{%
Haobo Wang$^1\quad$ Mingxuan Xia$^2\quad$ Yixuan Li$^3\quad$ Yuren Mao$^2\quad$ \\ \textbf{Lei Feng$^{45}\quad$  Gang Chen$^1\quad$  Junbo Zhao$^1$\thanks{Corresponding author.}}
   \\
  $^1$Key Lab of Intelligent Computing based Big Data of Zhejiang Province, Zhejiang University\\
  $^2$School of Software Technology, Zhejiang University\\
  $^3$Department of Computer Sciences, University of Wisconsin-Madison\\
  $^4$College of Computer Science, Chongqing University\\
  $^5$Center for Advanced Intelligence Project, RIKEN\\
  \texttt{\{wanghaobo,xiamingxuan,yuren.mao,cg,j.zhao\}@zju.edu.cn}\\ \texttt{sharonli@cs.wisc.edu, lfeng@cqu.edu.cn}
}

\begin{document}

\maketitle

\begin{abstract}

\emph{Partial-label learning} (PLL) is a peculiar weakly-supervised learning task where the training samples are generally associated with a set of candidate labels instead of single ground truth.
  While a variety of label disambiguation methods have been proposed in this domain, they normally assume a class-balanced scenario that may not hold in many real-world applications.
  Empirically, we observe degenerated performance of the prior methods when facing the combinatorial challenge from the long-tailed distribution and partial-labeling.
  In this work, we first identify the major reasons that the prior work failed.
  We subsequently propose \textbf{SoLar}, a novel Optimal Transport-based framework that allows to refine the disambiguated labels towards matching the marginal class prior distribution.
  SoLar additionally incorporates a new and systematic mechanism for estimating the long-tailed class prior distribution under the PLL setup.
  Through extensive experiments, SoLar exhibits substantially superior results on standardized benchmarks compared to the previous state-of-the-art PLL methods. Code and data are available at: \textcolor{brilliantrose}{\url{https://github.com/hbzju/SoLar}}.
  
\end{abstract}

\section{Introduction}
The remarkable success of deep learning typically requires collecting massive labeled data, which is notoriously labor-intensive. Of particular concern, data labeling can suffer from inherent and pervasive label ambiguity. Take Figure \ref{fig:mule-example} (a) as an example, a Mule can be visually similar to both Donkeys and Horses, which hinders non-expert annotators from recognizing the true breed. Similar cases also arise in many real-world applications, such as automatic image annotation \cite{DBLP:journals/pami/ChenPC18}, bird song recognition \cite{DBLP:conf/kdd/BriggsFR12}, and facial age estimation \cite{DBLP:conf/eccv/PanisL14}. To deal with ambiguous supervision, \emph{partial-label learning} (PLL) \cite{DBLP:journals/ida/HullermeierB06,DBLP:journals/jmlr/CourST11}, which allows each training example to be annotated by a set of candidate labels, has attracted significant attention from the community. A plethora of methods have been developed to tackle this problem, including pseudo-labeling methods \cite{DBLP:conf/icml/LvXF0GS20,wang2022pico}, graph-based methods \cite{DBLP:conf/kdd/WangLZ19,DBLP:conf/aaai/XuLG19,DBLP:journals/tkde/LyuFWLL21}, etc.

Despite the promise, existing PLL methods have been commonly driven by the assumption that training data consists of class-balanced distribution, which may not hold in practice. In many real-world scenarios, training data exhibits a \emph{long-tailed label distribution} \cite{DBLP:journals/corr/abs-2110-04596}. That is, many labels occur infrequently in the training data. Concerningly, the imbalanced data can exacerbate the fundamental challenge of label disambiguation, \emph{i.e.}, identifying the ground-truth label from candidates. Indeed, we find that current best-performing PLL methods display degenerated performance in the long-tailed setting. This happens because the predictions of pseudo-labeling---a core component that PLL methods rely on---can be largely biased towards the head and majority classes.
We exemplify the phenomenon in Figure \ref{fig:mule-example} (b), where a strong method PRODEN \cite{DBLP:conf/icml/LvXF0GS20} rarely assigns probability mass to the tail classes (indexed by 9 and 10). As a result, these tail classes suffer from almost zero accuracy, as shown in Figure \ref{fig:mule-example} (c). To date, few efforts have been made to resolve this.

Motivated by this, we propose a novel framework for long-tailed partial-label learning, called \textbf{S}inkh\textbf{o}rn \textbf{La}bel \textbf{r}efinery (dubbed \textbf{SoLar}). Our framework emphasizes the long-tailed nature of training data, while performing label disambiguation for partial-label learning. Our key idea is to enforce constraints on pseudo-labels---the distribution of which should \emph{match the class prior distribution}. The constraints can incentivize the label disambiguation process to select tail labels from candidate sets, instead of always selecting head classes.
We formalize the idea as an optimal transport problem \cite{DBLP:journals/ftml/PeyreC19} that searches for a proper label assignment subjected to constraints: (1) the probability mass is distributed within candidate sets; (2) the distribution of (predicted) pseudo-labels matches the prior class distribution. The first constraint is natural in the PLL setup, and the latter one uniquely empowers SoLar to disambiguate tail labels from candidates. We show that our constrained optimization objective can be tractably solved using the Sinkhorn-Knopp algorithm \cite{DBLP:conf/nips/Cuturi13}, which incurs only a few computational overheads. Theoretically, we prove that our overall objective is statistically consistent, which ensures the optimality of the learned classifier at a population level.

As an integral part of our framework, SoLar tackles the challenge of estimating the class prior distribution, which is necessitated in our constrained optimization. Unlike conventional long-tailed learning (LTL) scenarios \cite{DBLP:conf/iclr/MenonJRJVK21,peng2022optimal}, the PLL setup is presented with label ambiguity in the proximity of candidate sets.
This makes it \emph{intractable} to estimate the marginal class prior distribution, especially by counting training samples grouped by class labels.
We tackle this non-trivial problem by performing an iterative class prior estimation, which provides a strong proxy for real prior (Section~\ref{class-prior-est}). Empirically in Figure~\ref{fig:mule-example} (b), we show that the estimated class prior favorably matches the ground-truth prior on a long-tailed version of the CIFAR10 dataset with partial labels.

We comprehensively evaluate SoLar on various benchmark datasets, where SoLar establishes state-of-the-art performance. Compared to the best baseline, SoLar improves the accuracy on tail classes by \textbf{15.12}\% on the long-tailed version of the CIFAR10 dataset. While our work primarily focuses on label disambiguation from the PLL perspective, we demonstrate that SoLar can be compatible with LTL techniques as well. For example, we equip SoLar with logit adjustment~\cite{DBLP:conf/iclr/MenonJRJVK21}, a state-of-the-art LTL method and further improve SoLar's performance by \textbf{6.72}\% on long-tailed CIFAR10. We hope our work will inspire future works to tackle this important problem.

\begin{figure}[t]
	\centering
	\subfigure[Example of label ambiguity]{
		\includegraphics[width=0.27\columnwidth]{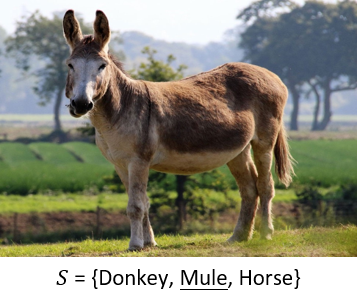}
	}\quad
	\subfigure[Class prior estimation]{
		\includegraphics[width=0.3\columnwidth]{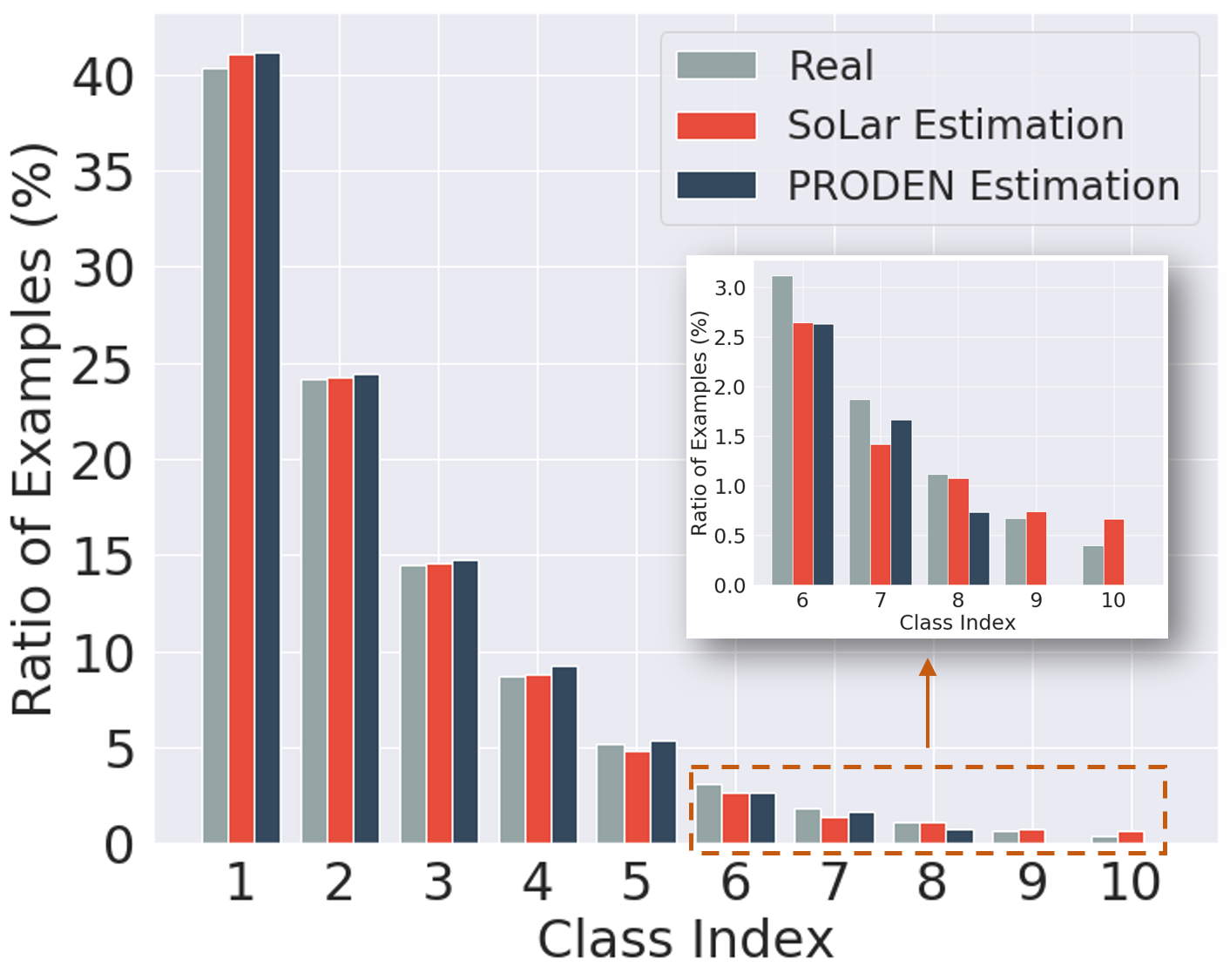}
	}\quad
	\subfigure[Class-wise accuracy]{
		\includegraphics[width=0.3\columnwidth]{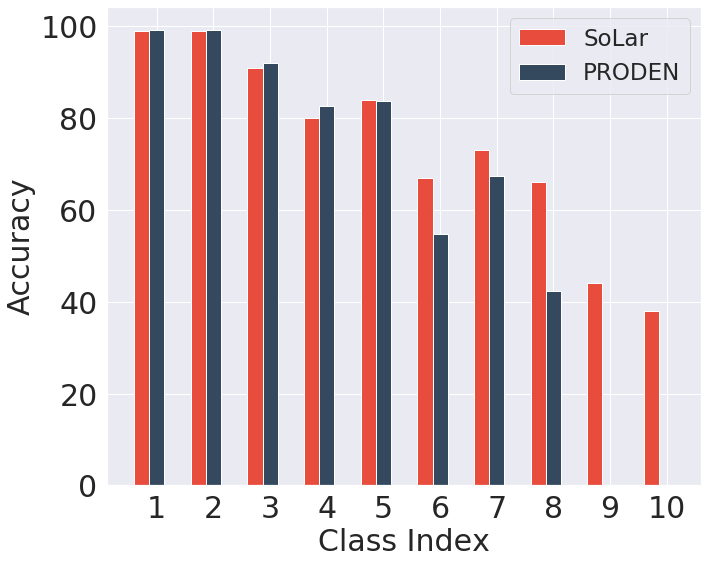}
	}
   \caption{\small (a) An input image with three candidate labels, where the ground-truth is \texttt{Mule}. (b) The real/estimated class distribution (by counting real/predicted labels) on a long-tailed version of CIFAR10 dataset with partial labels (more details in Section~\ref{experiments}).
   The class prior estimated by SoLar is very close to the real one, while PRODEN fails to recognize some tail labels.
   (c) Class-wise performance comparison of PRODEN and SoLar on the same dataset, where SoLar performs much better on data-scarce classes.  }
   \label{fig:mule-example}
   \vskip -0.1in
\end{figure}

\section{Problem Setup}\label{notations}
Let $\mathcal{X}=\mathbb{R}^d$ denote the input space and $\mathcal{Y}=\{1,\ldots,L\}$ denote the label space. We receive a training dataset $\mathcal{D}=\{(\bm{x}_i,S_i)\}_{i=1}^n$ with $n$ examples. Each tuple in $\mathcal{D}$ consists of an image $\bm{x}_i\in\mathcal{X}$ and the candidate label set $S_i\subset\mathcal{Y}$. Following previous works \cite{DBLP:conf/icml/LvXF0GS20,wang2022pico}, we assume the true label $y_i\in\mathcal{Y}$ of $\bm{x}_i$ is concealed in $S_i$, \IE, $y_i\in S_i$. A fundamental challenge in partial-label learning is label disambiguation, \IE, identifying the ground-truth label $y_i$ from the candidate label set $S_i$.

Our goal is to train a classifier $f:\mathcal{X}\mapsto[0,1]^L$, parameterized by $\theta$, that can perform predictions on unseen testing data. Here $f$ is the softmax output of a neural network, and $f_j(\cdot)$ denotes the $j$-th entry. Besides, we denote the prediction matrix by $\bm{P}=[\bm{p}_1,\ldots,\bm{p}_n]^\top=[p_{ij}]_{n\times L}$, where $p_{ij}=f_j(\bm{x}_i)$. To perform label disambiguation, we maintain a pseudo-label matrix $\bm{Q}=[\bm{q}_1,\ldots,\bm{q}_n]^\top\in[0,1]^{n\times L}$ and train the classifier with the cross-entropy loss $l_\text{ce}(f;\bm{x}_i,\bm{q}_i)=\mathop{\sum}\nolimits_{j=1}^L-q_{ij}\log(f_j(\bm{x}_i))$, where $q_{ij}$ is the $j$-th entry of $\bm{q}_i$.

\definecolor{codeblue}{rgb}{0.25,0.5,0.5}
\newcommand\mycommfont[1]{\footnotesize\ttfamily\textcolor{codeblue}{#1}}
\SetCommentSty{mycommfont}

\begin{algorithm}[!t]
\small	
	\DontPrintSemicolon
	\SetNoFillComment
	\textbf{Input:} Training dataset $\mathcal{D}$, classifier $f$, uniform marginal $\bm{r},\bm{c}$, hyperparameters $\rho,\tau,\mu,\lambda$.\\
	\For {$epoch=1,2,\ldots,$}
	{
        \For {$step=1,2,\ldots,$}{
            Get classifier prediction $\bm{P}$ on a mini-batch of data $B$ of size $n_B$\\
            Calculate $\bm{M}$ such that $m_{ij}=p_{ij}^\lambda\mathbb{I}(j\in S_i)$\\
            \For {$t=1,\ldots,T$}{
                $\bm{\alpha}\leftarrow\bm{c}./(\bm{M}\bm{\beta}),\quad \bm{\beta}\leftarrow \bm{r}./(\bm{M}^\top\bm{\alpha})$ \tcp*{Sinkhorn-Knopp iteration}
            }
            $\bm{Q}=n_B\cdot\text{diag}(\bm{\alpha})\bm{M}\text{diag}(\bm{\beta})$ \tcp*{Compute pseudo-labels}
            \For {$j=1,\ldots,L$}{
                Generate a subset $B_j=\{(\bm{x}_i, \bm{q}_i)|j=\arg\max_{j'\in S_i}q_{ij'}\}$ from $B$\\
                Select reliable samples by small loss $l_i$ and high-confidence $e_i>\tau$\\
                \tcp{Class-wise reliable sample selection}
            }
            Train the classifier $f$ by minimizing the cross-entropy loss
        }
        Compute $\bm{z}$ by $z_j=\frac{1}{n}\sum\limits_{i=1}^{n}\mathbb{I}(j=\arg\max\limits_{j'\in S_i} f_{j'}(\bm{x}_i))$ \tcp*{Empirical class prior estimation}
        $\bm{r}\leftarrow\mu\bm{r}+(1-\mu)\bm{z}$ \tcp*{Moving-average distribution updating}
	}
\caption{Pseudo-code of SoLar.}
\label{alg}
\end{algorithm}

\section{Proposed Method}

In this section, we describe our novel \textbf{S}inkh\textbf{o}rn \textbf{La}bel \textbf{r}efinery (SoLar) framework for partial-label learning. As we reckon, the combinatorial challenge of partial-labeling and long-tail learning lies in matching between a decent marginal prior distribution with drawing the pseudo labels.
To cope with these problems, SoLar comprises two components.
On one hand, SoLar formulates an optimal transport objective to facilitate the refinement of pseudo-labels to match the marginal class prior.
The resulted constrained optimization objective encourages the model to properly draw the tail labels.
On the other hand, the PLL setting manifests difficulty in straightforwardly estimating the class prior. SoLar henceforth incorporates two novel techniques---a moving-average estimation approach with a complementary reliable set sampling mechanism.
\emph{The pseudo-code is summarized in Algorithm \ref{alg}}.

\subsection{Sinkhorn Pseudo-Label Refinery}\label{sinkhorn-label-refinery}

\paragraph{Optimal Transport Objective.}
We formalize an optimal transport problem for proper label assignments.
At each training step, SoLar searches for pseudo-labels $\bm{Q}$ close to the current classifier's predictions $\bm{P}$, while subject to some constraints:
\begin{equation}\label{ot-obj}
\begin{split}
\min_{\bm{Q}\in\Delta} & E(\bm{Q},\bm{P})=\langle\bm{Q},-\log(\bm{P})\rangle=-\sum_{i=1}^n\sum_{j=1}^Lq_{ij}\log p_{ij}\\
\text{s.t. } \Delta=\{[q_{ij}&]_{n\times L}|\bm{Q}^\top\bm{1}_n=\bm{r},\bm{Q}\bm{1}_L=\bm{c},q_{ij}=0 \text{ if } j\notin S_i\}\subset[0,1]^{n\times L}.
\end{split}
\end{equation}
$\bm{r}$ is an $L$-dimensional probability simplex that indicates the prior class distribution. In other words, we would like the summation of the $j$-th column in $\bm{Q}$ (\IE, the total probability mass predicted as class $j$) to match the prior class probability.
Note that, here we temporarily assume we have a decent estimation of the class priors and we will describe the means of estimation in the following Section~\ref{class-prior-est}. The column vector $\bm{c}=\frac{1}{n}\bm{1}_n$ indicates that our $n$ training examples are sampled uniformly.
In Eq. (\ref{ot-obj}), with a slight abuse of notation, $\bm{Q}$ is essentially a joint probability matrix post to the refinery.
That is, each $\bm{x}_i$ is treated as a supplier holding $\frac{1}{n}$ units of goods while the $j$-th label needs $q_{ij}$ units of goods. To obtain pseudo-labels, we take a re-scaled solution $n\bm{Q}$ bearing an assumption of a uniform marginal $\frac{1}{n}$ of $\bm{x}_i$.

With further scrutiny of the objective function above, this formulation complies with two constraints we posited, such that: (i)-it avoids biased assignment towards (only) the head labels in a long-tailed PLL setup, with $\bm{Q}^\top\bm{1}_n=\bm{r}$ constraining the distribution of pseudo-labeling to match the prior of the class labels.
(ii)-the probability mass of $\bm{Q}$ in each row is driven to be distributed within the candidate label set of a sample $\bm{x}_i$.

In Eq. (\ref{ot-obj}), we use $-\log(p_{ij})$ as cost such that predictions with higher confidence enjoy a lower cost. This can also be replaced by other cost functions $h(p_{ij})$ as long as $h$ is monotonically decreasing. But, we show that the negative log-likelihood cost results in a statistically consistent risk estimator.

\begin{theorem}\label{theorem-1}
Denote $\tilde{\Delta}$ the set of measures on $\mathcal{X}\times\mathcal{Y}$ that satisfies the constraints in Eq. (\ref{ot-obj}). Let $\mathcal{D}_p$ be the distribution of tuple $(\bm{x},S)$. The population risk of our method is as follows,
\begin{equation}
\begin{split}
\mathcal{R}(f)=\inf_{\varpi\in\tilde{\Delta}}\mathbb{E}_{(\bm{x},S)\sim \mathcal{D}_p}\KL(f(\bm{x})||\varpi(y|\bm{x})).
\end{split}
\end{equation}
Let $f^{**}$ and $f^{*}$ denote the optimal classifiers trained on fully-supervised data with cross-entropy loss and PLL data with $\mathcal{R}(f)$, respectively. Assume the small ambiguity condition \cite{DBLP:journals/jmlr/CourST11} is satisfied. Then, under the deterministic scenario and ignoring the null set, we have $f^{*}=f^{**}$.
\end{theorem}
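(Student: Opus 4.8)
The plan is to reduce the claim to two facts: that the fully-supervised optimum is the true posterior, and that this posterior is the \emph{unique} feasible conditional for the constrained PLL problem. First I would identify $f^{**}$. Minimizing the fully-supervised cross-entropy risk $\mathbb{E}_{(\bm{x},y)}[-\log f_y(\bm{x})]$ over all classifiers is a standard proper-scoring-rule computation: pointwise the minimizer is the true posterior $f^{**}(\bm{x})=\eta(\bm{x})$ with $\eta_j(\bm{x})=P(y=j\mid\bm{x})$. Under the deterministic scenario $\eta(\bm{x})$ collapses to the one-hot vector $e_{y(\bm{x})}$ supported on the true label, so $f^{**}$ is the exact disambiguation.

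Next I would show that $f^{**}$ is a global minimizer of $\mathcal{R}$. Since $\KL(\cdot\|\cdot)\ge 0$, we have $\mathcal{R}(f)\ge 0$ for every $f$, so it suffices to exhibit a single $\varpi\in\tilde{\Delta}$ with $\varpi(y\mid\bm{x})=f^{**}(\bm{x})$ almost everywhere, which forces $\mathcal{R}(f^{**})=0$. The natural candidate is the true joint law of $(\bm{x},y)$: its $\mathcal{X}$-marginal is the data distribution (matching $\bm{c}$), its $\mathcal{Y}$-marginal is the true class prior (matching $\bm{r}$, which we take to be correctly estimated), and because $y(\bm{x})\in S(\bm{x})$ its conditional is supported inside every candidate set. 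Hence $f^{**}\in\tilde{\Delta}$ after identifying conditionals, $\mathcal{R}(f^{**})=0$, and any optimizer $f^{*}$ must also satisfy $\mathcal{R}(f^{*})=0$.

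From $\mathcal{R}(f^{*})=0$ I would deduce that $f^{*}$ is itself feasible: the infimum over $\varpi$ is attained at $\varpi=f^{*}$ (the only way a nonnegative KL integral can vanish), so $f^{*}(\bm{x})$ must be supported on $S(\bm{x})$ and its induced $\mathcal{Y}$-marginal must equal $\bm{r}$. It then remains to upgrade ``feasible'' to ``equal to $f^{**}$'', and this is where the small ambiguity condition enters and where I expect the real difficulty to lie. Support on candidate sets alone clearly does not pin down $f^{*}$, and even adding the single marginal constraint leaves, a priori, a family of admissible reassignments of mass among co-occurring labels; the deterministic one-hot structure must be combined with small ambiguity to eliminate them. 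I would argue by contradiction: if $f^{*}\ne f^{**}$ on a positive-measure set, then $f^{*}$ places mass on some wrong label $j$ over a positive-measure set $U_j\subset\{y\ne j,\ j\in S\}$, and matching the marginal $r_j$ forces a compensating removal of mass from label $j$ exactly where $j$ is the true label. The small ambiguity condition guarantees a positive-measure ``witness'' region where $j$ is excluded from the candidate set, which should break the chain of mass-preserving substitutions and yield the contradiction; carefully formalizing this measure-preservation argument is the crux of the proof. Finally I would invoke ``ignoring the null set'' to conclude $f^{*}=f^{**}$ almost everywhere.
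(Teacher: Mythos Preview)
Your first two steps---identifying $f^{**}$ with the true posterior and showing $\mathcal{R}(f^{**})=0$ via the true joint law---are correct and match the paper's argument exactly.

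The uniqueness step, however, is overcomplicated because of a missed observation. You write that ``support on candidate sets alone clearly does not pin down $f^{*}$'' and then launch into a marginal--mass-preservation argument that you yourself flag as the unformalized crux. But the candidate-set constraint \emph{does} pin down $f^{*}$ directly, once you remember that $f^{*}(\bm{x})$ is a function of $\bm{x}$ alone while $S$ is \emph{random} given $\bm{x}$. From $\mathcal{R}(f^{*})=0$ you get $\KL\bigl(f^{*}(\bm{x})\,\|\,\varpi(y\mid\bm{x},S)\bigr)=0$ for $\mathcal{D}_p$-almost every pair $(\bm{x},S)$, hence $f^{*}(\bm{x})$ is supported inside $S$ for every $S$ that occurs with positive probability at that $\bm{x}$. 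If some wrong label $y'\neq y(\bm{x})$ received positive mass from $f^{*}(\bm{x})$, then $P(y'\in S\mid \bm{x},y)=1$, which is exactly what the small ambiguity condition ($\kappa<1$) forbids. This is the paper's entire uniqueness argument: one line, no marginal bookkeeping.

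So the gap is not that your mass-compensation idea is wrong---it might be completable---but that it is unnecessary, and you have not actually carried it out. Your notation $S(\bm{x})$ suggests you are implicitly treating the candidate set as a deterministic function of $\bm{x}$; under that reading small ambiguity degenerates (each $P(\bar y\in S\mid\bm{x},y)$ is $0$ or $1$) and the problem becomes trivial, which is likely why the support constraint looked insufficient to you. Restore the randomness of $S$ and the marginal constraint $\bm{r}$ plays no role in the consistency proof at all.
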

We refer the readers to Appendix~\ref{theorem-1-proof} for the proof. Theorem \ref{theorem-1} indicates that the optimal classifier $f^{**}$ can be recovered by our algorithm at a population level. Compared with other theoretically consistent PLL risks \cite{DBLP:conf/icml/LvXF0GS20,DBLP:conf/nips/FengL0X0G0S20}, our risk preserves the marginal class distribution of the estimated pseudo-labels, and is reasonably more favorable in the long-tailed learning setup.

\textbf{Fast Approximation.} To resolve Eq. (\ref{ot-obj}), we adapt the well-known Sinkhorn-Knopp algorithm \cite{DBLP:conf/nips/Cuturi13} for efficient optimization. Formally, we define a matrix $\bm{M}$ such that $m_{ij}=p_{ij}^\lambda\mathbb{I}(j\in S_i)$, where $\lambda>0$ is a smoothing regularization coefficient. The pseudo-labels are obtained by,
\begin{equation}\label{sink}
\begin{split}
&\bm{Q}=n\cdot\text{diag}(\bm{\alpha})\bm{M}\text{diag}(\bm{\beta}),\\
\text{with iteratively }& \text{updated}\quad \bm{\alpha}\leftarrow\bm{c}./(\bm{M}\bm{\beta}),\quad \bm{\beta}\leftarrow \bm{r}./(\bm{M}^\top\bm{\alpha}).
\end{split}
\end{equation}
Here $./$ denotes element-wise division. $\bm{\alpha}\in\mathbb{R}^n$ and $\bm{\beta}\in\mathbb{R}^L$ are known as scaling coefficients vectors. Detailed derivation are provided in Appendix~\ref{sk-derivation}. Here, we set $m_{ij}=0$ if $j\notin S_i$, which is equivalent to define $\log(0)=-\inf$. Thus, $\bm{Q}$ will also be assigned $0$s outside the candidate sets.

Inspired by \cite{DBLP:conf/nips/CaronMMGBJ20}, we also involve a queue acceleration trick to avoid traveling the whole training set. We found that Eq. (\ref{sink}) typically converges to a satisfactory result in $<50$ rounds. With only simple algebra operations, this can be efficiently implemented on GPUs; see Appendix~\ref{time-cost}.

\subsection{Iterative Class Prior Estimation}\label{class-prior-est}
Unlike conventional long-tailed learning scenarios \cite{DBLP:conf/iclr/MenonJRJVK21,peng2022optimal}, the PLL setup is presented with label ambiguity in the proximity of candidate sets.
This makes it intractable to estimate the marginal class prior distribution, especially by counting training samples grouped by class labels.

In SoLar, towards estimating the class marginals, our aim is to propose a solution with (almost) full autonomy that can adapt to the data distribution and the training state of the model.
Here, we propose two interdependent techniques to achieve this goal.
The resulted estimation is utilized in Eq~(\ref{ot-obj}).
The core of our method is two-fold: (i)-we propose a moving-average style updating rule for class prior estimation, resulting in \emph{stable training dynamics}; (ii)-we carefully devise a sample selection mechanism for robust training, while \emph{preserving the marginal label distribution} as much as possible.

\paragraph{Moving-Average Distribution Updating. }
We propose using the model predicted results as a proxy for class prior estimation.
In spite of its simplicity, we cannot ignore the fact during the early stage of training, the predicted labels can be largely imprecise, not trustworthy, and biased towards the head labels.
Therefore, we propose an update mechanism in a moving-average fashion.
Since we hold no assumption on the true class prior, the updating scheme is initiated from a uniform class prior $\bm{r}=[1/L,\ldots,1/L]$.
Respectively at each cycle (often defined by epochs), SoLar refines the distribution as follows.
\begin{equation}\label{class-prior-estimation}
\begin{split}
\bm{r}\leftarrow\mu\bm{r}+(1-\mu)\bm{z}, \quad \text{where } z_j=\frac{1}{n}\sum\nolimits_{i=1}^{n}\mathbb{I}(j=\arg\max\nolimits_{j'\in S_i} f_{j'}(\bm{x}_i)),
\end{split}
\end{equation}
where $\mu\in[0,1]$ is a preset scalar.
As displayed, the class prior is progressively updated alongside stabler training dynamics through a linear function.
As the training proceeds, the model becomes more accurate and thus the estimated distribution becomes more reliable.

\textbf{Class-wise Reliable Sample Selection. }
In the early stages of training, the class prior estimation can be inaccurate due to the poor pseudo label predictions. Thus, we posit that the lack of proper sample selection at early stages may jeopardize the marginal prior estimation.
As common as it is for PLL learning dynamics, we want to reduce the variance of the pseudo-label quality, via a reliable sampling algorithm.
This module works together with the moving-average class prior estimation.

A brute-force but effective strategy is to employ the small-loss sampling criterion in noisy-label learning~\cite{DBLP:conf/nips/HanYYNXHTS18}, which assumes low-quality pseudo-labels can likely be characterized by a large loss.
However, in a long-tailed setup, the tail labels can also incur a large loss because of their low-shot nature,  making them rarely selected.
We empirically show this phenomenon in Section~\ref{main-ablation}.
We hence implement the following procedures into the proposed SoLar framework.
The goal of such a module is to make sure the coverage of the tail labels as well as to keep the marginal prior unbiased.

To begin with, to make sure that all labels --- particularly the tail labels --- {are} coverd, we split the batch of data $B$ into $L$ slices that are indexed by the \emph{argmax} taken from the refined pseudo-labels: $B_j=\{(\bm{x}_i, \bm{q}_i)|j=\arg\max_{j'\in S_i} q_{ij'}\}\subset B$.
In what follows, we derive two combinatorial rules on the selectivity of the training samples:
\begin{enumerate}[(i)]
    \item
    Based on the aforementioned setup, we modify the small-loss criterion in the noisy-label learning~\cite{DBLP:conf/nips/HanYYNXHTS18} and adapt it to a long-tailed setup.
    Specifically, we first compute the instance-wise loss by marginalizing over the class index: $l_i=-\sum\nolimits_{j=1}^Lq_{ij}\log(p_{ij})$. Then, within the provided set $B_j$, we anchor the total sample number to be $k=\min(\lceil r_j\rho|B|\rceil, |B_j|)$, which is associated with $r_j$ for \emph{distribution-preserving}. The samples are being selected with top-$k$ smallest $l_i$.
    \item A parallel mechanism is inspired by the application of pseudo-labeling schemes in semi-supervised learning~\cite{DBLP:conf/nips/SohnBCZZRCKL20}. To adopt such approaches in SoLar, we purposely select those samples having a very high confidence score which is measured by the cosine similarity: $e_i=\bm{q}_i^\top\bm{p}_i$. As such, this second criteria selects samples that meet the inequality $e_i>\tau$. This mechanism further ensures high utility on the (likely) well-disambiguated samples.
\end{enumerate}
Notice that, $\rho,\tau>0$ are two thresholding hyper-parameters.
The two introduced mechanisms are established to work collaboratively. Simply put, samples that satisfy either criterion are used for training the classifier. We will show later in Section~\ref{main-ablation} the efficacy of our sample selection mechanisms.

\section{Experiments}\label{experiments}
In this section, we experimentally analyze the proposed SoLar method under various scenarios for the imbalanced PLL problem. More empirical results can be found in Appendix~\ref{extra-exp}.

\subsection{Setup}\label{exp-setup}
\textbf{Datasets.} First, we evaluate SoLar on two long-tailed datasets CIFAR10-LT and CIFAR100-LT introduced in \cite{DBLP:conf/nips/CaoWGAM19,DBLP:conf/cvpr/WeiSMYY21}. The training images are randomly removed class-wise to follow a pre-defined imbalance ratio $\gamma=\frac{n_{1}}{n_{L}}$, where $n_j$ is the image number of the $j$-th class. For convenience, class indices are sorted based on the class-wise sample size, in descending order with $n_1\ge...\ge n_L$. We have $n_1=5,000$ for CIFAR10-LT and $n_1=500$ for CIFAR100-LT. We use different imbalance ratios to evaluate the performance of SoLar, with $\gamma\in\{50,100,200\}$ for CIFAR10-LT and $\gamma\in\{10,20,50\}$ for CIFAR100-LT. We then generate partially labeled datasets by manually flipping negative labels $\bar{y}\neq y$ to false-positive labels with probability $\phi=P(\bar{y}\in Y|\bar{y}\neq y)$, which follows the settings in previous works \cite{DBLP:conf/icml/LvXF0GS20,DBLP:conf/icml/WenCHL0L21}. The final candidate label set is composed of the ground-truth label and the flipped false-positive labels. We choose $\phi \in \{0.3,0.5\}$ for CIFAR10-LT and $\phi \in \{0.05,0.1\}$ for CIFAR100-LT. For all experiments, we report the mean and standard deviation based on $3$ independent runs (with different random seeds).

\begin{table}[!t]
    \caption{Accuracy comparisons on CIFAR10-LT and CIFAR100-LT under various flipping probability $\phi$ and imbalance ratio $\gamma$. Bold indicates superior results.}
    \addtolength{\tabcolsep}{1pt}
    \centering
    \small
    \begin{tabular}{p{2cm}<{\centering}|c c c c c c}
    \toprule
        \multirow{4}{*}{Methods} & \multicolumn{6}{c}{CIFAR10-LT}  \\
        \cmidrule(lr){2-7}
        & \multicolumn{3}{c}{$\phi=0.3$} & \multicolumn{3}{c}{$\phi=0.5$}  \\
        \cmidrule(lr){2-4}\cmidrule(lr){5-7}
        & $\gamma=50$ & $\gamma=100$ & $\gamma=200$ & $\gamma=50$ & $\gamma=100$ & $\gamma=200$  \\ \cmidrule(lr){1-1}\cmidrule(lr){2-4}\cmidrule(lr){5-7}
        MSE & 61.13\tiny{$\pm$1.08} & 52.59\tiny{$\pm$0.48} & 48.09\tiny{$\pm$0.45} & 49.61\tiny{$\pm$1.42} & 43.90\tiny{$\pm$0.77} & 39.52\tiny{$\pm$0.70} \\
        EXP & 52.93\tiny{$\pm$3.44} & 43.59\tiny{$\pm$0.16} & 42.56\tiny{$\pm$0.44} & 50.62\tiny{$\pm$3.00} & 43.69\tiny{$\pm$2.72} & 41.07\tiny{$\pm$0.62} \\
        LWS & 44.51\tiny{$\pm$0.03} & 43.60\tiny{$\pm$0.12} & 42.33\tiny{$\pm$0.58} & 24.62\tiny{$\pm$9.67} & 27.33\tiny{$\pm$1.84} & 28.74\tiny{$\pm$1.86} \\
        VALEN & 58.34\tiny{$\pm$1.05} & 50.20\tiny{$\pm$6.55} & 46.98\tiny{$\pm$1.24} & 40.02\tiny{$\pm$1.88} & 37.10\tiny{$\pm$0.88} & 36.61\tiny{$\pm$0.57} \\
        CC & 78.76\tiny{$\pm$0.27} & 71.86\tiny{$\pm$0.78} & 63.38\tiny{$\pm$0.79} & 73.09\tiny{$\pm$0.40} & 64.88\tiny{$\pm$1.03} & 54.41\tiny{$\pm$0.85} \\
        PRODEN & 81.95\tiny{$\pm$0.19} & 71.09\tiny{$\pm$0.54} & 63.00\tiny{$\pm$0.54} & 66.00\tiny{$\pm$3.60} & 62.17\tiny{$\pm$3.36} & 54.65\tiny{$\pm$1.00} \\
        PiCO & 75.42\tiny{$\pm$0.49} & 67.73\tiny{$\pm$0.64} & 61.12\tiny{$\pm$0.67} & 72.33\tiny{$\pm$0.08} & 63.25\tiny{$\pm$0.64} & 53.92\tiny{$\pm$1.64} \\
        \rowcolor{Gray} SoLar (ours) & \textbf{83.80}\tiny{$\pm$0.52} & \textbf{76.64}\tiny{$\pm$1.66} & \textbf{67.47}\tiny{$\pm$1.05} & \textbf{81.38}\tiny{$\pm$2.84} & \textbf{74.16}\tiny{$\pm$3.03} & \textbf{62.12}\tiny{$\pm$4.33} \\
        \bottomrule[0.5pt]\toprule[0.5pt]
        \multirow{4}{*}{Methods} & \multicolumn{6}{c}{CIFAR100-LT}  \\
        \cmidrule(lr){2-7}
        & \multicolumn{3}{c}{$\phi=0.05$} & \multicolumn{3}{c}{$\phi=0.1$}  \\
        \cmidrule(lr){2-4}\cmidrule(lr){5-7}
        & $\gamma=10$ & $\gamma=20$ & $\gamma=50$& $\gamma=10$ & $\gamma=20$ & $\gamma=50$  \\
        \cmidrule(lr){1-1}\cmidrule(lr){2-4}\cmidrule(lr){5-7}
        MSE & 49.92\tiny{$\pm$0.64} & 43.94\tiny{$\pm$0.86} & 37.77\tiny{$\pm$0.40} & 42.99\tiny{$\pm$0.47} & 37.19\tiny{$\pm$0.72} & 31.49\tiny{$\pm$0.35} \\
        EXP & 25.86\tiny{$\pm$0.94} & 24.84\tiny{$\pm$0.40} & 23.58\tiny{$\pm$0.47} & 24.82\tiny{$\pm$1.41} & 21.27\tiny{$\pm$1.24} & 19.88\tiny{$\pm$0.43}\\
        LWS & 48.85\tiny{$\pm$2.16} & 35.88\tiny{$\pm$1.29} & 19.22\tiny{$\pm$8.56} & 6.10\tiny{$\pm$2.05} & 7.16\tiny{$\pm$2.03} & 5.15\tiny{$\pm$0.36} \\

        VALEN & 49.12\tiny{$\pm$0.58} & 42.05\tiny{$\pm$1.52} & 35.62\tiny{$\pm$0.43} & 33.39\tiny{$\pm$0.65} & 30.67\tiny{$\pm$0.11} & 24.93\tiny{$\pm$0.87} \\

        CC & 60.36\tiny{$\pm$0.52} & 54.33\tiny{$\pm$0.21} & 45.83\tiny{$\pm$0.31} & 57.91\tiny{$\pm$0.41} & 51.09\tiny{$\pm$0.48} & \textbf{41.74}\tiny{$\pm$0.41} \\

        PRODEN & 60.31\tiny{$\pm$0.50} & 50.39\tiny{$\pm$0.96} & 42.29\tiny{$\pm$0.44} & 47.32\tiny{$\pm$0.60} & 41.82\tiny{$\pm$0.55} & 35.11\tiny{$\pm$0.08} \\
        PiCO & 54.05\tiny{$\pm$0.37} & 46.93\tiny{$\pm$0.65} & 38.74\tiny{$\pm$0.11} & 46.49\tiny{$\pm$0.46} & 39.80\tiny{$\pm$0.34} & 34.97\tiny{$\pm$0.09} \\
        \rowcolor{Gray} SoLar (ours) & \textbf{64.75}\tiny{$\pm$0.71} & \textbf{56.47}\tiny{$\pm$0.76} & \textbf{46.18}\tiny{$\pm$0.85} & \textbf{61.82}\tiny{$\pm$0.71} & \textbf{53.03}\tiny{$\pm$0.56} & 40.96\tiny{$\pm$1.01} \\
    \bottomrule
    \end{tabular}
    \label{tab:all-acc}
\end{table}

\textbf{Baselines.} We compare SoLar with six state-of-the-art partial-label learning methods: 1) PiCO \cite{wang2022pico} leverages contrastive learning to disambiguate the candidate labels by updating the pseudo-labels with contrastive prototype labels. 2) PRODEN \cite{DBLP:conf/icml/LvXF0GS20} is also a pseudo-labeling method that iteratively updates the latent label distribution by re-normalized classifier prediction.
3) VALEN \cite{xu2021instance} assumes the candidate labels are instance-dependent and recovers the latent label distributions by a Bayesian parametrization model.
4) LWS \cite{DBLP:conf/icml/WenCHL0L21} also works in a pseudo-labeling style, which weights the risk function by considering the trade-off between losses on candidate labels and non-candidate ones. 5) CC \cite{DBLP:conf/nips/FengL0X0G0S20} is a classifier-consistent method that assumes the candidate label set is uniformly sampled. 6) MSE and EXP \cite{DBLP:conf/icml/FengK000S20} utilize mean square error and exponential loss as the risk estimators. All the hyper-parameters are searched according to the original papers.

\textbf{Implementation details.} We use an 18-layer ResNet as the feature backbone. The model is trained for $1000$ epochs using a standard SGD optimizer with a momentum of $0.9$. The initial learning rate is set as $0.01$, and decays by the cosine learning rate schedule. The batch size is $256$. These configurations are applied for SoLar and all baselines for fair comparisons. For SoLar, we devise a pre-estimation training stage, where we run a model for $100/20$ epochs (on CIFAR10/100-LT) respectively to obtain a coarse-grained class prior. After that, we re-initialize the model weights and run with this class prior. For our Sinkhorn-Knopp algorithm, we fix the smoothing regularization parameter as $\lambda=3$ and the length of the queue for acceleration as $64$ times batch size. The moving-average parameter $\mu$ for class prior estimation is set as $0.1$/$0.05$ in the first stage and fixed as $0.01$ later. For class-wise reliable sample selection, we linearly ramp up $\rho$ from $0.2$ to $0.5/0.6$ in the first $50$ epochs and fix the high-confidence selection threshold $\tau$ as $0.99$. To improve the representation ability of SoLar, we further involve consistency regularization \cite{DBLP:conf/nips/SohnBCZZRCKL20} and Mixup \cite{DBLP:conf/iclr/ZhangCDL18} techniques on reliable examples; see Appendix~\ref{rep-det} for more details. For fair comparisons, we equip all the baselines except PiCO with these two techniques as well.

\subsection{Main Results}
\textbf{SoLar achieves SOTA results.} As shown in Table \ref{tab:all-acc}, SoLar significantly outperforms all the rivals by a notable margin under various settings of imbalance ratio $\gamma$ and label ambiguity degree $\phi$. Specifically, on CIFAR10-LT dataset with $\phi=0.5$, we improve upon the best baseline by \textbf{8.29}\%, \textbf{9.28}\%, and \textbf{7.47}\% when the imbalance ratio $\gamma$ is set to $50$, $100$, and $200$ respectively. Moreover, most baselines display significant performance degradation as $\phi$ increases, whereas SoLar remains competitive. We also verify the effectiveness of SoLar on a more challenging CIFAR100-LT dataset with 10 times more classes and thus stronger label ambiguity. Notably, the performance gap between SoLar and baselines remains substantial. An interesting observation is that CC obtains impressive performance on the CIFAR-100 dataset, but is still inferior to SoLar in most cases. These observations clearly validate the superiority of SoLar.

\begin{table}[!t]
    \caption{Different shots accuracy comparisons on CIFAR10-LT ($\phi=0.5,\gamma=100$) and CIFAR100-LT ($\phi=0.1,\gamma=20$). The best results are marked in bold and the second-best marked in underline.}
    \addtolength{\tabcolsep}{1pt}
    \centering
    \small
    \begin{tabular}{p{2cm}<{\centering}| p{1.2cm}<{\centering}  p{1.2cm}<{\centering}  p{1.2cm}<{\centering}  p{1.2cm}<{\centering}  p{1.2cm}<{\centering} p{1.2cm}<{\centering}}
    \toprule
        \multirow{2.5}{*}{Methods} & \multicolumn{3}{c}{CIFAR10-LT} & \multicolumn{3}{c}{CIFAR100-LT}  \\
        \cmidrule(lr){2-4}\cmidrule(lr){5-7}
        & Many & Medium & Few & Many & Medium & Few  \\ \cmidrule(lr){1-1}\cmidrule(lr){2-4}\cmidrule(lr){5-7}
        MSE & 81.11  & 42.03  & 9.18  & 57.46  & 37.57  & 16.53 \\
        EXP & 92.64  & 39.75  & 0.00  & 60.35  & 3.99  & 0.00 \\
        LWS & 89.09  & 1.52  & 0.00  & 20.80  & 0.86  & 0.00 \\
        VALEN & 85.30 & 28.78 & 0.00   & 58.74 & 16.25 & 0.07  \\
        CC & 94.56  & 65.61  & \underline{34.22}  & 73.03   & \underline{52.15}   & \underline{28.05} \\
        PRODEN & \textbf{96.83}  & \underline{72.18}  & 14.17  & \textbf{76.86}  & 43.14  & 5.43 \\
        PiCO & 93.33  & 66.14  & 29.30  & 70.75  & 42.42  & 6.14 \\
        \rowcolor{Gray} SoLar (ours) & \underline{96.50}  & \textbf{76.01}  & \textbf{49.34}  & \underline{74.33}  & \textbf{54.09}  & \textbf{30.62} \\
    \bottomrule
    \end{tabular}
    \label{tab:shot-acc}
\end{table}

\textbf{Results on different groups of labels.} We show that SoLar achieves overall strong performance on \emph{both} head and tail classes. To see this, we report accuracy on three groups of classes with different sample sizes. Recall from Section~\ref{exp-setup} that the class indices are sorted based on the sample size, in descending order. We divide the classes into three groups: many ($\{1,2,3\}$), medium ($\{4,5,6,7\}$), and few ($\{8,9,10\}$) shots for CIFAR10-LT and many ($\{1,\dots,33\}$), medium ($\{34,\dots,67\}$), and few ($\{68,\dots,100\}$) shots for CIFAR100-LT. Table \ref{tab:shot-acc} shows the accuracy of different groups on
both CIFAR10-LT and CIFAR100-LT. On few-shot classes, the gaps between SoLar and the best baseline are \textbf{15.12}\% and \textbf{2.57}\% on CIFAR10-LT and CIFAR100-LT. Moreover, SoLar retains a competitive performance on many-shot labels (\IE, head classes).
In contrast, most baselines exhibit degenerated performance on tail classes, especially on the CIFAR100-LT dataset.
The comparisons highlight that SoLar poses a good disambiguation ability in the long-tailed PLL setup.

\begin{table}[!t]
    \caption{Ablation results on CIFAR10-LT ($\phi=0.5,\gamma=100$) and CIFAR100-LT ($\phi=0.1,\gamma=20$).}
    \small
    \addtolength{\tabcolsep}{1pt}
    \centering
    \begin{tabular}{p{3.5cm}<{\centering}| p{0.7cm}<{\centering}  p{0.7cm}<{\centering}  p{0.7cm}<{\centering}  p{0.7cm}<{\centering}  p{0.7cm}<{\centering} p{0.7cm}<{\centering} p{0.7cm}<{\centering} p{0.7cm}<{\centering}}
    \toprule
        \multirow{2.5}{*}{Ablation} & \multicolumn{4}{c}{CIFAR10-LT} & \multicolumn{4}{c}{CIFAR100-LT}  \\
        \cmidrule(lr){2-5}\cmidrule(lr){6-9}
        & All & Many & Med. & Few & All & Many & Med. & Few  \\ \cmidrule(lr){1-1}\cmidrule(lr){2-5}\cmidrule(lr){6-9}
        \rowcolor{Gray} SoLar & 74.16 & 96.50 & 76.01 & 49.34 & 53.03  & 74.33  & 54.09  & 30.62   \\
        \cmidrule(lr){1-1}\cmidrule(lr){2-5}\cmidrule(lr){6-9}
        PRODEN w S+CPE & 62.83  & 96.80  & 69.79  & 19.58  & 41.69  & 75.70  & 42.29  & 7.06   \\
        \cmidrule(lr){1-1}\cmidrule(lr){2-5}\cmidrule(lr){6-9}
        SoLar w/o Pre-Est & 63.97  & 79.97  & 70.21  & 39.67  & 50.51  & 68.09  & 54.15  & 29.18   \\
        SoLar w Cand-Est & 74.34  & 96.37  & 74.47  & 52.14  & 52.78  & 76.15  & 53.93  & 28.22   \\
        SoLar-Oracle & 74.11  & 96.81  & 77.10  & 47.43  & 54.16  & 77.91  & 56.94  & 27.55   \\
        \cmidrule(lr){1-1}\cmidrule(lr){2-5}\cmidrule(lr){6-9}
        SoLar w/o S & 29.61  & 86.10  & 6.95  & 3.34  & 35.80  & 66.85  & 32.06  & 8.61   \\
        SoLar w GS & 53.01  & 96.58  & 60.10  & 0.00  & 33.62  & 72.03  & 23.91  & 5.21   \\
        SoLar w/o S-SL & 47.49  & 96.46  & 46.37  & 0.00  & 12.80  & 30.42  & 7.97  & 0.15   \\
    \bottomrule
    \end{tabular}
    \label{tab:ablation}
    \vskip -0.1in
\end{table}

\subsection{Ablation Studies}\label{main-ablation}
In this section, we present our main ablation analysis to show the effectiveness of SoLar.
The main results of our ablation studies are shown in Table \ref{tab:ablation}; more results can be found in Appendix~\ref{ablation-appendix}.

\textbf{Effect of Sinkhorn label refinery.} It is of interest to see whether the performance improvement comes from our Sinkhorn label refinery procedure. To see this, we equip PRODEN with our sample selection and class prior estimation mechanism (dubbed \textit{PRODEN w S+CPE}). As shown in Table \ref{tab:ablation}, PRODEN w S+CPE still fails to disambiguate labels on the tail, which validates the superiority of our Sinkhorn label refinery procedure.

\textbf{Effect of class prior estimation.}
To verify the effectiveness of class prior estimation in SoLar, we compare with three variants: 1) \textit{SoLar w/o Pre-Est} which removes the pre-estimation stage of SoLar and trains from uniform class prior;
2) \textit{SoLar w Cand-Est} initializes the class prior distribution by counting per-class candidate numbers, \IE, $r_j=\sum_{i=1}^n\mathbb{I}(j\in S_i)$; 3) \textit{SoLar-Oracle} also removes the pre-estimation stage but trains from the real class prior. From Table~\ref{tab:ablation}, we draw three salient observations. First, SoLar outperforms SoLar w/o Pre-Est, which suggests that using a coarsely-estimated class prior is more beneficial than using a uniform one. Secondly, SoLar w Cand-Est obtains competitive results, showing an alternative way for initializing $\bm{r}$.
Finally, we contrast with SoLar-Oracle, which theoretically serves as an upper bound by using the real class prior. We observe that SoLar's performance favorably matches or even outperforms SoLar-Oracle on few-shot classes. The results are encouraging since SoLar does not require access to real class prior at all.

\textbf{Effect of sample selection.}
Next, we experiment with different sample selection strategies, including: 1) \textit{SoLar w/o S} which regards all examples as clean samples and does not perform selection; 2) \textit{SoLar w GS} which performs selection in a global manner rather than class-wise; 3) \textit{SoLar w/o S-SL} which only selects high-confidence examples.
From Table \ref{tab:ablation}, we can observe that SoLar w/o S worsened the performance because many pseudo-labels are unreliable at the beginning. SoLar w/o GS works relatively well on data-rich labels but suffers on the tail. This happens because the majority classes dominate the sample selection. SoLar w/o S-SL performs similarly to SoLar w/o GS on CIFAR10-LT, but diverges on CIFAR100-LT. In the beginning, few examples are selected as the classifier is less confident, resulting in degenerated solution. Our results indicate that our sample selection procedure is indispensable to SoLar in handling long-tailed class distribution.

\begin{figure}[!t]
	\centering
	\subfigure[SoLar with logit adjustment]{
		\includegraphics[width=0.3\columnwidth]{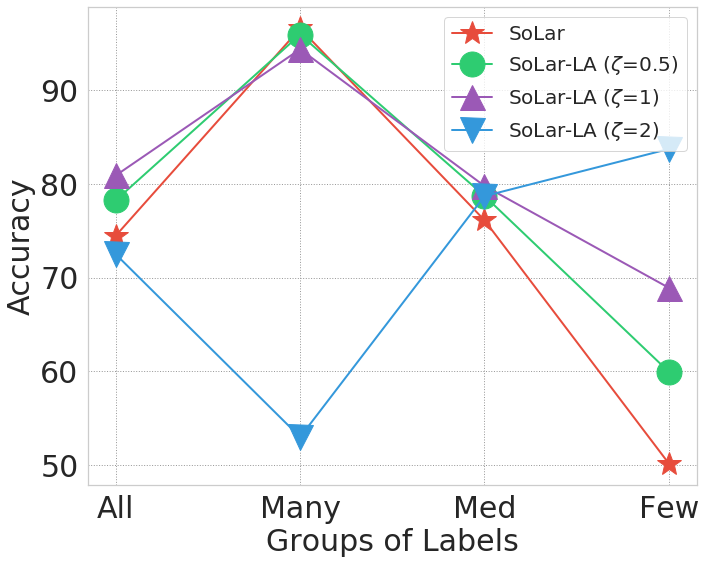}
	}\quad
	\subfigure[CUB200-LT]{
		\includegraphics[width=0.3\columnwidth]{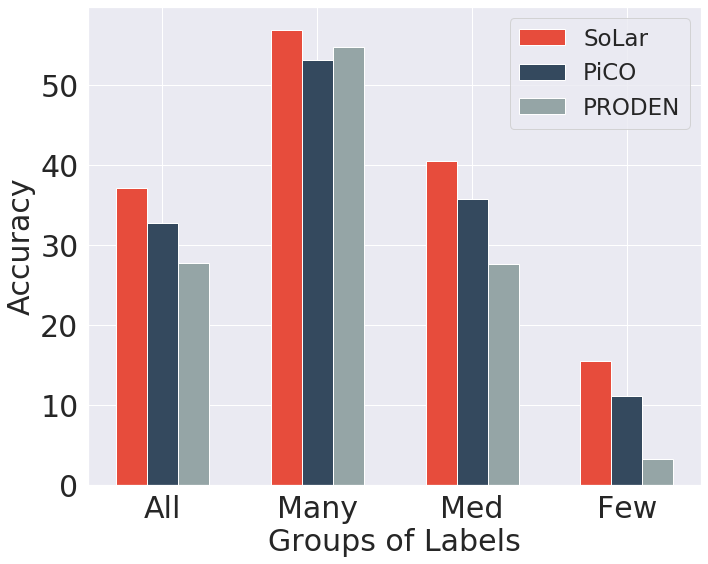}
	}\quad
	\subfigure[CIFAR100-H-LT]{
		\includegraphics[width=0.3\columnwidth]{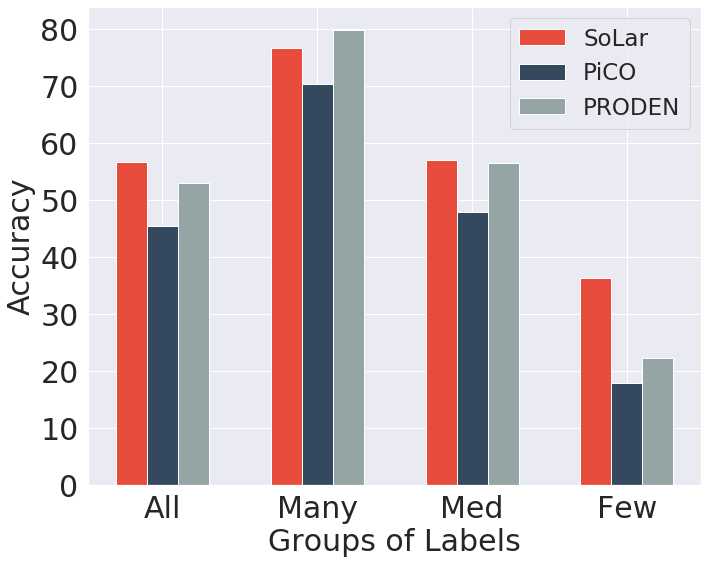}
	}
   \caption{\small (a) Performance comparisons of SoLar and SoLar with logit adjustment (SoLar-LA) on CIFAR10-LT ($\phi=0.5,\gamma=100$). $\zeta$ is the temperature parameter of logit adjustment. (b) Performance comparisons of SoLar and baselines on the fine-grained CUB200-LT dataset. (c) Performance comparisons of SoLar and baselines on the CIFAR100-LT dataset with hierarchical labels (CIFAR100-H-LT).}
   \label{fig:further-analysis}
\end{figure}

\subsection{Further Analysis}
\textbf{Combining SoLar with logit adjustment. }
While our work mainly focuses on label disambiguation from the PLL perspective, potential improvement is straightforward by applying long-tailed learning techniques to well-disambiguated datasets. To this end, we equip SoLar with \emph{logit adjustment} \cite{DBLP:conf/iclr/MenonJRJVK21} (dubbed \textit{SoLar-LA}), a state-of-the-art LTL technique, for improved performance. Denote the logits of our model by $g(\cdot)$, \IE, the output of the network prior to the softmax layer. Given a testing data $\bm{x}$, SoLar-LA made prediction by $\arg\max_{j\in[L]} g_j(\bm{x})-\zeta\cdot\log(r_j)$, where $\zeta$ is a temperature parameter for calibration.
This post-hoc corrected prediction is known to be theoretically consistent for minimizing the balanced error \cite{DBLP:conf/iclr/MenonJRJVK21}.
Note that logit adjustment cannot be applied to the original PLL data directly, as it requires the true class prior as well as a classifier trained by minimizing the long-tailed empirical risk. Fortunately, SoLar achieves both goals. Hence, we implement SoLar-LA by using the classifier as well as the estimated class prior of SoLar. Figure \ref{fig:further-analysis} (a) reports the performance of SoLar-LA with varying $\zeta$ values. With a proper $\zeta$, SoLar-LA improves upon SoLar by \textbf{6.72}\% on CIFAR10-LT. This suggests that SoLar is compatible with existing LTL methods and opens the door to exploring more advanced LTL techniques.

\textbf{Results on fine-grained partial-label learning.}
In practice, semantically similar classes can lead to significant label ambiguity, as exemplified in Figure \ref{fig:mule-example} (a). To test the limit of SoLar, we follow PiCO \cite{wang2022pico} and evaluate on two fine-grained datasets: 1) CUB200-LT \cite{WelinderEtal2010} dataset with 200 bird species; 2) CIFAR100-LT with hierarchical labels (CIFAR100-H-LT), where the candidate labels are generated within the same superclass\footnote{The CIFAR100 dataset comprises 20 superclasses, with 5 classes in each superclass.}. We set $\phi=0.05,\gamma=5$ for CUB200-LT and $\phi=0.5,\gamma=20$ for CIFAR100-H-LT. In Figure \ref{fig:further-analysis} (b) and (c), we compare SoLar with two strongest baselines PRODEN and
PiCO, where SoLar improves the best baselines by a large margin (\textbf{4.37}\% on CUB200-LT and \textbf{3.67}\% on CIFAR100-H-LT). These results clearly validate the effectiveness of SoLar, even when the dataset presents severe label ambiguity.

\begin{wrapfigure}{R}{0.45\textwidth}
\begin{minipage}{0.45\textwidth}
\begin{table}[H]
\addtolength{\tabcolsep}{-1.5pt}
\centering
\small
\vskip -0.35in
\caption{\small Performance comparisons on the SUN397 dataset with $\phi=0.05$. }
\vskip -0.05in
\label{tab:sun197}
    \begin{tabular}{c| c c c c }
    \toprule
        Methods & All & Many & Medium & Few   \\ \cmidrule(lr){1-1}\cmidrule(lr){2-5}
        PRODEN & 51.14  & 76.98  & 58.17  & 17.61  \\
        PiCO & 29.54  & 57.91  & 21.27  & 9.33  \\
        \rowcolor{Gray} SoLar & \textbf{61.58}  & \textbf{77.36}  & \textbf{62.59}  & \textbf{44.49} \\
    \bottomrule
  \end{tabular}
  \vskip -0.12in
\end{table}
\end{minipage}
\end{wrapfigure}

\textbf{Results on real-world long-tailed learning data.}
It is of interest to verify the effectiveness of SoLar on real-world imbalanced datasets. To this end, we conduct experiments on the large-scale SUN397 dataset \cite{DBLP:conf/cvpr/XiaoHEOT10} containing 108,754 RGB images and 397 scene classes; more results on real-world and imbalanced PLL datasets are shown in Appendix \ref{exp-realworld}. For the SUN397 dataset, we hold out 50 samples per class for testing and the resulting training set has an imbalanced ratio of $2311/50\approx46$. Similarly, we synthesize a partial-label dataset set with an ambiguity degree of $\phi=0.05$; see Appendix \ref{setup-fg-sun} for more details. From Table \ref{tab:sun197}, we can observe that SoLar still outperforms the baselines by a substantial margin.

\section{Related Work}
\textbf{Partial-label Learning} (PLL) \cite{DBLP:journals/ida/HullermeierB06} allows each example to be equipped with a candidate label set with the ground-truth label being concealed.
A plethora of PLL methods \cite{DBLP:journals/jmlr/CourST11,DBLP:conf/nips/LiuD12,DBLP:conf/aaai/YanG20,DBLP:conf/cikm/LiLOW21,DBLP:conf/ijcai/YanG21} have been developed. For example, maximum margin-based methods \cite{DBLP:conf/kdd/NguyenC08,DBLP:conf/pkdd/WangQ00HLC20} maximize the margins between candidate labels and the remaining ones. Graph-based approaches \cite{DBLP:conf/kdd/ZhangZL16,DBLP:conf/kdd/WangLZ19,DBLP:conf/aaai/XuLG19,DBLP:journals/tkde/LyuFWLL21,DBLP:conf/ijcai/LyuWF22} typically leverage nearest neighbors to smooth the logical label vectors. Recently, pseudo-labeling methods \cite{DBLP:conf/nips/FengL0X0G0S20,DBLP:conf/icml/LvXF0GS20,DBLP:conf/icml/WenCHL0L21,DBLP:conf/nips/XuQGZ21,wang2022pico,zhang2022exploiting} become increasingly prevalent due to their promising performance, \EG, PRODEN \cite{DBLP:conf/icml/LvXF0GS20} re-normalizes classifier's outputs and PiCO \cite{wang2022pico} updates pseudo-targets by using contrastively learned prototypes.
However, most of these works overlook the fact that class distribution can be imbalanced. Our empirical studies demonstrate that state-of-the-art PLL methods fail to disambiguate true labels on the minority classes, which motivates us to explore the optimal transport-based label disambiguation technique for PLL.

\textbf{Long-tailed Learning} (LTL) assumes that the training data follow Zipf's law that few labels pose a large number of instances, which is observed in a wide range of applications \cite{DBLP:journals/corr/abs-2110-04596}. To cope with the LTL problem, sampling-based methods \cite{DBLP:journals/jair/ChawlaBHK02,DBLP:conf/icml/ByrdL19} re-balance different classes at a data level, \EG, up-sampling on minority classes. Re-weighting methods \cite{DBLP:conf/nips/CaoWGAM19,DBLP:conf/iclr/MenonJRJVK21,DBLP:conf/iccv/ParkLJ021,DBLP:conf/nips/LiZTCO21,peng2022optimal} typically adjust the weights of instances or classes that calibrates the classifier predictions to be balanced. Some works also explore transfer learning \cite{DBLP:conf/cvpr/0002MZWGY19,DBLP:conf/cvpr/KimJS20} or ensemble learning \cite{DBLP:conf/cvpr/LiWKTWLF20,DBLP:conf/iclr/WangLM0Y21} in LTL. A recent trend in LTL \cite{DBLP:conf/iclr/KangXRYGFK20,DBLP:conf/nips/TangHZ20} is to decompose representation and classifier learning. Nevertheless, these methods assume a full-supervision of training data, which is unrealistic in many real-world applications. It has motivated researchers to study the LTL problem in weak-supervision scenarios, such as semi-supervised learning \cite{DBLP:conf/nips/KimHPYHS20,DBLP:conf/cvpr/WeiSMYY21} and noisy-label learning \cite{DBLP:conf/iclr/CaoCLAGM21,DBLP:journals/corr/abs-2108-11569,DBLP:conf/pakdd/WeiSLZ22}. In this work, we explore a new formulation of LTL where only candidate labels are provided.

\textbf{Optimal Transport} (OT) \cite{DBLP:journals/ftml/PeyreC19} is originally proposed to depict the distance between two measures. Recently, it has drawn huge attention from different fields in machine learning, including semi-supervised learning \cite{DBLP:conf/icml/TaiBV21}, domain adaptation \cite{DBLP:journals/pami/CourtyFTR17,DBLP:conf/aistats/RedkoCFT19}, object detection \cite{DBLP:conf/cvpr/GeLLYS21}, and generative models \cite{DBLP:conf/icml/ArjovskyCB17}. The most related one to our work is OTLM \cite{peng2022optimal}, which also applies OT to the supervised LTL problem. By assuming a known test class distribution, OTLM performs OT-based post-hoc correction on model predictions to obtain balanced results. In contrast, our work targets the long-tailed PLL problem and holds no assumption on class priors.

\section{Conclusion}
In this work, we present a novel Sinkhorn label refinery framework (SoLar) for a challenging imbalanced partial-label learning problem.
Key to our method, we derive an optimal transport objective that refines the pseudo-labels to match the true class prior.
Additionally, we tackle the non-trivial class prior estimation problem by a moving-average technique, which is further guarded by a sample selection mechanism for controllable model training. Empirically, SoLar can run without knowledge of the real class distribution. Comprehensive experiments show that SoLar improves baseline algorithms by a significant margin.
As we collect the real-world applications, the partial-labeled tasks often arise together with a long-tailed class distribution.
That is, from the perspective of human annotators, a long-tailed label is much harder to tag and that may subsequently lead to a partial-labeled candidate set.
In the future, we hope to explore this line of research in broader tasks.
We hope our work will draw more attention from the community towards a broader view of tackling the imbalanced partial-label learning problem.




\bibliography{neurips_2022}

\begin{thebibliography}{10}

\bibitem{DBLP:journals/pami/ChenPC18}
Ching{-}Hui Chen, Vishal~M. Patel, and Rama Chellappa.
\newblock Learning from ambiguously labeled face images.
\newblock {\em {IEEE} Trans. Pattern Anal. Mach. Intell.}, 40(7):1653--1667,
  2018.

\bibitem{DBLP:conf/kdd/BriggsFR12}
Forrest Briggs, Xiaoli~Z. Fern, and Raviv Raich.
\newblock Rank-loss support instance machines for {MIML} instance annotation.
\newblock In {\em SIGKDD}, pages 534--542. {ACM}, 2012.

\bibitem{DBLP:conf/eccv/PanisL14}
Gabriel Panis and Andreas Lanitis.
\newblock An overview of research activities in facial age estimation using the
  {FG-NET} aging database.
\newblock In {\em ECCV}, volume 8926 of {\em Lecture Notes in Computer
  Science}, pages 737--750. Springer, 2014.

\bibitem{DBLP:journals/ida/HullermeierB06}
Eyke H{\"{u}}llermeier and J{\"{u}}rgen Beringer.
\newblock Learning from ambiguously labeled examples.
\newblock {\em Intell. Data Anal.}, 10(5):419--439, 2006.

\bibitem{DBLP:journals/jmlr/CourST11}
Timoth{\'{e}}e Cour, Benjamin Sapp, and Ben Taskar.
\newblock Learning from partial labels.
\newblock {\em J. Mach. Learn. Res.}, 12:1501--1536, 2011.

\bibitem{DBLP:conf/icml/LvXF0GS20}
Jiaqi Lv, Miao Xu, Lei Feng, Gang Niu, Xin Geng, and Masashi Sugiyama.
\newblock Progressive identification of true labels for partial-label learning.
\newblock In {\em ICML}, volume 119 of {\em Proceedings of Machine Learning
  Research}, pages 6500--6510. {PMLR}, 2020.

\bibitem{wang2022pico}
Haobo Wang, Ruixuan Xiao, Yixuan Li, Lei Feng, Gang Niu, Gang Chen, and Junbo
  Zhao.
\newblock Pico: Contrastive label disambiguation for partial label learning.
\newblock {\em ICLR}, 2022.

\bibitem{DBLP:conf/kdd/WangLZ19}
Deng{-}Bao Wang, Li~Li, and Min{-}Ling Zhang.
\newblock Adaptive graph guided disambiguation for partial label learning.
\newblock In {\em SIGKDD}, pages 83--91. {ACM}, 2019.

\bibitem{DBLP:conf/aaai/XuLG19}
Ning Xu, Jiaqi Lv, and Xin Geng.
\newblock Partial label learning via label enhancement.
\newblock In {\em AAAI}, pages 5557--5564. {AAAI} Press, 2019.

\bibitem{DBLP:journals/tkde/LyuFWLL21}
Gengyu Lyu, Songhe Feng, Tao Wang, Congyan Lang, and Yidong Li.
\newblock {GM-PLL:} graph matching based partial label learning.
\newblock {\em {IEEE} Trans. Knowl. Data Eng.}, 33(2):521--535, 2021.

\bibitem{DBLP:journals/corr/abs-2110-04596}
Yifan Zhang, Bingyi Kang, Bryan Hooi, Shuicheng Yan, and Jiashi Feng.
\newblock Deep long-tailed learning: {A} survey.
\newblock {\em CoRR}, abs/2110.04596, 2021.

\bibitem{DBLP:journals/ftml/PeyreC19}
Gabriel Peyr{\'{e}} and Marco Cuturi.
\newblock Computational optimal transport.
\newblock {\em Found. Trends Mach. Learn.}, 11(5-6):355--607, 2019.

\bibitem{DBLP:conf/nips/Cuturi13}
Marco Cuturi.
\newblock Sinkhorn distances: Lightspeed computation of optimal transport.
\newblock In {\em NeurIPS}, pages 2292--2300, 2013.

\bibitem{DBLP:conf/iclr/MenonJRJVK21}
Aditya~Krishna Menon, Sadeep Jayasumana, Ankit~Singh Rawat, Himanshu Jain,
  Andreas Veit, and Sanjiv Kumar.
\newblock Long-tail learning via logit adjustment.
\newblock In {\em ICLR}. OpenReview.net, 2021.

\bibitem{peng2022optimal}
Hanyu Peng, Mingming Sun, and Ping Li.
\newblock Optimal transport for long-tailed recognition with learnable cost
  matrix.
\newblock In {\em International Conference on Learning Representations}, 2022.

\bibitem{DBLP:conf/nips/FengL0X0G0S20}
Lei Feng, Jiaqi Lv, Bo~Han, Miao Xu, Gang Niu, Xin Geng, Bo~An, and Masashi
  Sugiyama.
\newblock Provably consistent partial-label learning.
\newblock In {\em NeurIPS}, 2020.

\bibitem{DBLP:conf/nips/CaronMMGBJ20}
Mathilde Caron, Ishan Misra, Julien Mairal, Priya Goyal, Piotr Bojanowski, and
  Armand Joulin.
\newblock Unsupervised learning of visual features by contrasting cluster
  assignments.
\newblock In {\em NeurIPS}, 2020.

\bibitem{DBLP:conf/nips/HanYYNXHTS18}
Bo~Han, Quanming Yao, Xingrui Yu, Gang Niu, Miao Xu, Weihua Hu, Ivor~W. Tsang,
  and Masashi Sugiyama.
\newblock Co-teaching: Robust training of deep neural networks with extremely
  noisy labels.
\newblock In {\em NeurIPS}, pages 8536--8546, 2018.

\bibitem{DBLP:conf/nips/SohnBCZZRCKL20}
Kihyuk Sohn, David Berthelot, Nicholas Carlini, Zizhao Zhang, Han Zhang, Colin
  Raffel, Ekin~Dogus Cubuk, Alexey Kurakin, and Chun{-}Liang Li.
\newblock Fixmatch: Simplifying semi-supervised learning with consistency and
  confidence.
\newblock In {\em NeurIPS}, 2020.

\bibitem{DBLP:conf/nips/CaoWGAM19}
Kaidi Cao, Colin Wei, Adrien Gaidon, Nikos Ar{\'{e}}chiga, and Tengyu Ma.
\newblock Learning imbalanced datasets with label-distribution-aware margin
  loss.
\newblock In {\em NeurIPS}, pages 1565--1576, 2019.

\bibitem{DBLP:conf/cvpr/WeiSMYY21}
Chen Wei, Kihyuk Sohn, Clayton Mellina, Alan~L. Yuille, and Fan Yang.
\newblock Crest: {A} class-rebalancing self-training framework for imbalanced
  semi-supervised learning.
\newblock In {\em {IEEE} Conference on Computer Vision and Pattern Recognition,
  {CVPR} 2021, virtual, June 19-25, 2021}, pages 10857--10866. Computer Vision
  Foundation / {IEEE}, 2021.

\bibitem{DBLP:conf/icml/WenCHL0L21}
Hongwei Wen, Jingyi Cui, Hanyuan Hang, Jiabin Liu, Yisen Wang, and Zhouchen
  Lin.
\newblock Leveraged weighted loss for partial label learning.
\newblock In {\em ICML}, volume 139 of {\em Proceedings of Machine Learning
  Research}, pages 11091--11100. {PMLR}, 2021.

\bibitem{xu2021instance}
Ning Xu, Congyu Qiao, Xin Geng, and Min-Ling Zhang.
\newblock Instance-dependent partial label learning.
\newblock {\em Advances in Neural Information Processing Systems},
  34:27119--27130, 2021.

\bibitem{DBLP:conf/icml/FengK000S20}
Lei Feng, Takuo Kaneko, Bo~Han, Gang Niu, Bo~An, and Masashi Sugiyama.
\newblock Learning with multiple complementary labels.
\newblock In {\em ICML}, volume 119 of {\em Proceedings of Machine Learning
  Research}, pages 3072--3081. {PMLR}, 2020.

\bibitem{DBLP:conf/iclr/ZhangCDL18}
Hongyi Zhang, Moustapha Ciss{\'{e}}, Yann~N. Dauphin, and David Lopez{-}Paz.
\newblock mixup: Beyond empirical risk minimization.
\newblock In {\em ICLR}. OpenReview.net, 2018.

\bibitem{WelinderEtal2010}
P.~Welinder, S.~Branson, T.~Mita, C.~Wah, F.~Schroff, S.~Belongie, and
  P.~Perona.
\newblock {Caltech-UCSD Birds 200}.
\newblock Technical Report CNS-TR-2010-001, California Institute of Technology,
  2010.

\bibitem{DBLP:conf/cvpr/XiaoHEOT10}
Jianxiong Xiao, James Hays, Krista~A. Ehinger, Aude Oliva, and Antonio
  Torralba.
\newblock {SUN} database: Large-scale scene recognition from abbey to zoo.
\newblock In {\em CVPR}, pages 3485--3492. {IEEE} Computer Society, 2010.

\bibitem{DBLP:conf/nips/LiuD12}
Li{-}Ping Liu and Thomas~G. Dietterich.
\newblock A conditional multinomial mixture model for superset label learning.
\newblock In {\em NeurIPS}, pages 557--565, 2012.

\bibitem{DBLP:conf/aaai/YanG20}
Yan Yan and Yuhong Guo.
\newblock Partial label learning with batch label correction.
\newblock In {\em AAAI}, pages 6575--6582. {AAAI} Press, 2020.

\bibitem{DBLP:conf/cikm/LiLOW21}
Changchun Li, Ximing Li, Jihong Ouyang, and Yiming Wang.
\newblock Detecting the fake candidate instances: Ambiguous label learning with
  generative adversarial networks.
\newblock In {\em CIKM}, pages 903--912. {ACM}, 2021.

\bibitem{DBLP:conf/ijcai/YanG21}
Yan Yan and Yuhong Guo.
\newblock Multi-level generative models for partial label learning with
  non-random label noise.
\newblock In {\em IJCAI}, pages 3264--3270. ijcai.org, 2021.

\bibitem{DBLP:conf/kdd/NguyenC08}
Nam Nguyen and Rich Caruana.
\newblock Classification with partial labels.
\newblock In {\em {SIGKDD}}, pages 551--559. {ACM}, 2008.

\bibitem{DBLP:conf/pkdd/WangQ00HLC20}
Haobo Wang, Yuzhou Qiang, Chen Chen, Weiwei Liu, Tianlei Hu, Zhao Li, and Gang
  Chen.
\newblock Online partial label learning.
\newblock In {\em {ECML} {PKDD}}, volume 12458 of {\em Lecture Notes in
  Computer Science}, pages 455--470. Springer, 2020.

\bibitem{DBLP:conf/kdd/ZhangZL16}
Min{-}Ling Zhang, Bin{-}Bin Zhou, and Xu{-}Ying Liu.
\newblock Partial label learning via feature-aware disambiguation.
\newblock In {\em SIGKDD}, pages 1335--1344. {ACM}, 2016.

\bibitem{DBLP:conf/ijcai/LyuWF22}
Gengyu Lyu, Yanan Wu, and Songhe Feng.
\newblock Deep graph matching for partial label learning.
\newblock In {\em IJCAI}, pages 3306--3312. ijcai.org, 2022.

\bibitem{DBLP:conf/nips/XuQGZ21}
Ning Xu, Congyu Qiao, Xin Geng, and Min{-}Ling Zhang.
\newblock Instance-dependent partial label learning.
\newblock In {\em NeurIPS}, pages 27119--27130, 2021.

\bibitem{zhang2022exploiting}
Fei Zhang, Lei Feng, Bo~Han, Tongliang Liu, Gang Niu, Tao Qin, and Masashi
  Sugiyama.
\newblock Exploiting class activation value for partial-label learning.
\newblock In {\em International Conference on Learning Representations}, 2022.

\bibitem{DBLP:journals/jair/ChawlaBHK02}
Nitesh~V. Chawla, Kevin~W. Bowyer, Lawrence~O. Hall, and W.~Philip Kegelmeyer.
\newblock {SMOTE:} synthetic minority over-sampling technique.
\newblock {\em J. Artif. Intell. Res.}, 16:321--357, 2002.

\bibitem{DBLP:conf/icml/ByrdL19}
Jonathon Byrd and Zachary~Chase Lipton.
\newblock What is the effect of importance weighting in deep learning?
\newblock In {\em ICML}, volume~97 of {\em Proceedings of Machine Learning
  Research}, pages 872--881. {PMLR}, 2019.

\bibitem{DBLP:conf/iccv/ParkLJ021}
Seulki Park, Jongin Lim, Younghan Jeon, and Jin~Young Choi.
\newblock Influence-balanced loss for imbalanced visual classification.
\newblock In {\em ICCV}, pages 715--724. {IEEE}, 2021.

\bibitem{DBLP:conf/nips/LiZTCO21}
Mingchen Li, Xuechen Zhang, Christos Thrampoulidis, Jiasi Chen, and Samet
  Oymak.
\newblock Autobalance: Optimized loss functions for imbalanced data.
\newblock In {\em NeurIPS}, pages 3163--3177, 2021.

\bibitem{DBLP:conf/cvpr/0002MZWGY19}
Ziwei Liu, Zhongqi Miao, Xiaohang Zhan, Jiayun Wang, Boqing Gong, and Stella~X.
  Yu.
\newblock Large-scale long-tailed recognition in an open world.
\newblock In {\em CVPR}, pages 2537--2546. Computer Vision Foundation / {IEEE},
  2019.

\bibitem{DBLP:conf/cvpr/KimJS20}
Jaehyung Kim, Jongheon Jeong, and Jinwoo Shin.
\newblock M2m: Imbalanced classification via major-to-minor translation.
\newblock In {\em CVPR}, pages 13893--13902. Computer Vision Foundation /
  {IEEE}, 2020.

\bibitem{DBLP:conf/cvpr/LiWKTWLF20}
Yu~Li, Tao Wang, Bingyi Kang, Sheng Tang, Chunfeng Wang, Jintao Li, and Jiashi
  Feng.
\newblock Overcoming classifier imbalance for long-tail object detection with
  balanced group softmax.
\newblock In {\em CVPR}, pages 10988--10997. Computer Vision Foundation /
  {IEEE}, 2020.

\bibitem{DBLP:conf/iclr/WangLM0Y21}
Xudong Wang, Long Lian, Zhongqi Miao, Ziwei Liu, and Stella~X. Yu.
\newblock Long-tailed recognition by routing diverse distribution-aware
  experts.
\newblock In {\em ICLR}. OpenReview.net, 2021.

\bibitem{DBLP:conf/iclr/KangXRYGFK20}
Bingyi Kang, Saining Xie, Marcus Rohrbach, Zhicheng Yan, Albert Gordo, Jiashi
  Feng, and Yannis Kalantidis.
\newblock Decoupling representation and classifier for long-tailed recognition.
\newblock In {\em ICLR}. OpenReview.net, 2020.

\bibitem{DBLP:conf/nips/TangHZ20}
Kaihua Tang, Jianqiang Huang, and Hanwang Zhang.
\newblock Long-tailed classification by keeping the good and removing the bad
  momentum causal effect.
\newblock In {\em NeurIPS}, 2020.

\bibitem{DBLP:conf/nips/KimHPYHS20}
Jaehyung Kim, Youngbum Hur, Sejun Park, Eunho Yang, Sung~Ju Hwang, and Jinwoo
  Shin.
\newblock Distribution aligning refinery of pseudo-label for imbalanced
  semi-supervised learning.
\newblock In {\em NeurIPS}, 2020.

\bibitem{DBLP:conf/iclr/CaoCLAGM21}
Kaidi Cao, Yining Chen, Junwei Lu, Nikos Ar{\'{e}}chiga, Adrien Gaidon, and
  Tengyu Ma.
\newblock Heteroskedastic and imbalanced deep learning with adaptive
  regularization.
\newblock In {\em ICLR}. OpenReview.net, 2021.

\bibitem{DBLP:journals/corr/abs-2108-11569}
Tong Wei, Jiang{-}Xin Shi, Wei{-}Wei Tu, and Yu{-}Feng Li.
\newblock Robust long-tailed learning under label noise.
\newblock {\em CoRR}, abs/2108.11569, 2021.

\bibitem{DBLP:conf/pakdd/WeiSLZ22}
Tong Wei, Jiang{-}Xin Shi, Yu{-}Feng Li, and Min{-}Ling Zhang.
\newblock Prototypical classifier for robust class-imbalanced learning.
\newblock In {\em PAKDD}, volume 13281 of {\em Lecture Notes in Computer
  Science}, pages 44--57. Springer, 2022.

\bibitem{DBLP:conf/icml/TaiBV21}
Kai~Sheng Tai, Peter Bailis, and Gregory Valiant.
\newblock Sinkhorn label allocation: Semi-supervised classification via
  annealed self-training.
\newblock In {\em ICML}, volume 139 of {\em Proceedings of Machine Learning
  Research}, pages 10065--10075. {PMLR}, 2021.

\bibitem{DBLP:journals/pami/CourtyFTR17}
Nicolas Courty, R{\'{e}}mi Flamary, Devis Tuia, and Alain Rakotomamonjy.
\newblock Optimal transport for domain adaptation.
\newblock {\em {IEEE} Trans. Pattern Anal. Mach. Intell.}, 39(9):1853--1865,
  2017.

\bibitem{DBLP:conf/aistats/RedkoCFT19}
Ievgen Redko, Nicolas Courty, R{\'{e}}mi Flamary, and Devis Tuia.
\newblock Optimal transport for multi-source domain adaptation under target
  shift.
\newblock In {\em AISTATS}, volume~89 of {\em Proceedings of Machine Learning
  Research}, pages 849--858. {PMLR}, 2019.

\bibitem{DBLP:conf/cvpr/GeLLYS21}
Zheng Ge, Songtao Liu, Zeming Li, Osamu Yoshie, and Jian Sun.
\newblock {OTA:} optimal transport assignment for object detection.
\newblock In {\em CVPR}, pages 303--312. Computer Vision Foundation / {IEEE},
  2021.

\bibitem{DBLP:conf/icml/ArjovskyCB17}
Mart{\'{\i}}n Arjovsky, Soumith Chintala, and L{\'{e}}on Bottou.
\newblock Wasserstein generative adversarial networks.
\newblock In {\em ICML}, volume~70 of {\em Proceedings of Machine Learning
  Research}, pages 214--223. {PMLR}, 2017.

\bibitem{DBLP:conf/eccv/YuLGT18}
Xiyu Yu, Tongliang Liu, Mingming Gong, and Dacheng Tao.
\newblock Learning with biased complementary labels.
\newblock In {\em ECCV}, volume 11205 of {\em Lecture Notes in Computer
  Science}, pages 69--85. Springer, 2018.

\bibitem{DBLP:journals/siammax/Knight08}
Philip~A. Knight.
\newblock The sinkhorn-knopp algorithm: Convergence and applications.
\newblock {\em {SIAM} J. Matrix Anal. Appl.}, 30(1):261--275, 2008.

\bibitem{DBLP:conf/nips/KhoslaTWSTIMLK20}
Prannay Khosla, Piotr Teterwak, Chen Wang, Aaron Sarna, Yonglong Tian, Phillip
  Isola, Aaron Maschinot, Ce~Liu, and Dilip Krishnan.
\newblock Supervised contrastive learning.
\newblock In {\em NeurIPS}, 2020.

\bibitem{DBLP:journals/corr/abs-1909-13719}
Ekin~D. Cubuk, Barret Zoph, Jonathon Shlens, and Quoc~V. Le.
\newblock Randaugment: Practical data augmentation with no separate search.
\newblock {\em CoRR}, abs/1909.13719, 2019.

\bibitem{DBLP:conf/ijcai/ZhangY15a}
Min{-}Ling Zhang and Fei Yu.
\newblock Solving the partial label learning problem: An instance-based
  approach.
\newblock In {\em IJCAI}, pages 4048--4054. {AAAI} Press, 2015.

\bibitem{wang2022partial}
Wei Wang and Min-Ling Zhang.
\newblock Partial label learning with discrimination augmentation.
\newblock In {\em SIGKDD}, 2022.

\end{thebibliography}
\bibliographystyle{unsrt}

\newpage
\appendix

\begin{center}
    \Large{\textbf{Appendix}}
    \vspace{0.1cm}
\end{center}

\section{Theoretical Proofs}
\subsection{Proof of Theorem \ref{theorem-1}}\label{theorem-1-proof}
First, we provide the following lemma to show the consistency of the standard cross-entropy loss.
\begin{lemma}\label{lemma-1}
If the cross-entropy loss is used as loss function, the optimal classifier $f^{**}$ that minimizes the population risk $R_{\CE}(f)=\mathbb{E}_{\bm{x},y}[-\log(f_y(x))]$ satisfies $f^{**}_y(\bm{x})=p(y=i|\bm{x})$.
\end{lemma}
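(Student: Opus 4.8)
The plan is to reduce the population-level minimization to a pointwise (per-input) convex problem over the probability simplex, and then identify the minimizer via the non-negativity of the KL divergence. First I would invoke the tower property to write $R_{\CE}(f)=\mathbb{E}_{\bm{x}}\big[\mathbb{E}_{y\mid\bm{x}}[-\log f_y(\bm{x})]\big]$, so that the inner conditional expectation reads $-\sum_{i=1}^L p(y=i\mid\bm{x})\log f_i(\bm{x})$. Since at the population level $f$ is an unconstrained measurable map whose output already lives on the simplex (it is a softmax, so $\sum_i f_i(\bm{x})=1$), and since $f(\bm{x})$ may be chosen independently for each $\bm{x}$, minimizing $R_{\CE}(f)$ is equivalent to minimizing this inner sum separately for (almost) every fixed $\bm{x}$.

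Next, for a fixed $\bm{x}$ I would write $p_i:=p(y=i\mid\bm{x})$ and recognize the objective as the cross-entropy between $p$ and $f(\bm{x})$. Using the identity $-\sum_i p_i\log f_i(\bm{x})=H(p)+\KL(p\,\|\,f(\bm{x}))$, where $H(p)=-\sum_i p_i\log p_i$ does not depend on $f$, and invoking $\KL(p\,\|\,f(\bm{x}))\ge 0$ with equality if and only if $f(\bm{x})=p$, the pointwise minimum is attained precisely at $f_i(\bm{x})=p(y=i\mid\bm{x})$. Equivalently, a Lagrange-multiplier computation on $\min_f -\sum_i p_i\log f_i$ subject to $\sum_i f_i=1$ gives the stationary condition $p_i/f_i=\nu$, and normalization forces $\nu=1$, hence $f_i=p_i$; convexity of $-\log$ guarantees this stationary point is the global minimizer.

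The result has no deep obstacle — it is the standard Bayes-consistency of the cross-entropy — but the one point deserving care is justifying the passage from global to pointwise minimization. I would make this rigorous by noting that the integrand is bounded below for each $\bm{x}$ by $H(p(\cdot\mid\bm{x}))$, and that this lower bound is simultaneously achievable by the single measurable choice $f^{**}_i(\bm{x})=p(y=i\mid\bm{x})$; because no input is constrained to share its output with another at the population level, this choice attains the infimum of $R_{\CE}$. Modulo the usual null-set caveat (the minimizer is pinned down only for $\bm{x}$ outside a set of measure zero), this establishes $f^{**}_i(\bm{x})=p(y=i\mid\bm{x})$, which is the claim of the lemma.
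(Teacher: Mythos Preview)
Your proof is correct and follows essentially the same route as the paper: reduce the population risk to a pointwise minimization over the simplex and identify the minimizer there. The paper carries out only the Lagrange-multiplier computation you sketch (obtaining $f_i^{**}=p_i/\xi$ and then $\xi=1$ from the constraint), whereas you additionally give the $H(p)+\KL(p\,\|\,f)$ argument and are more careful about justifying the global-to-pointwise reduction and the null-set caveat; otherwise the arguments coincide.
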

\begin{proof}
We provide a proof sketch in the sequel, and a similar result has been shown in \cite{DBLP:conf/eccv/YuLGT18,DBLP:conf/nips/FengL0X0G0S20}. The cross-entropy loss leads to the following optimization problem,
\begin{equation}
\begin{split}
\min_f-\sum_{i=1}^Lp(y=i|\bm{x})\log(f_i(\bm{x}))\quad\text{s.t.}\quad \sum_{i=1}^Lf_i(\bm{x})=1.
\end{split}
\end{equation}
By introducing a multiplier $\xi$, the corresponding Lagrangian is as follows,
\begin{equation}
\begin{split}
\mathcal{L}(f,\xi)=-\sum_{i=1}^Lp(y=i|\bm{x})\log(f_i(\bm{x}))+\xi(\sum_{i=1}^Lf_i(\bm{x})-1).
\end{split}
\end{equation}
Setting the derivative to $0$ yields that,
\begin{equation}
\begin{split}
f_i^{**}(\bm{x})=\frac{1}{\xi}p(y=i|\bm{x}),\quad \sum_{i=1}^Lf_i^{**}(\bm{x})=\frac{1}{\xi}\sum_{i=1}^Lp(y=i|\bm{x})=1.
\end{split}
\end{equation}
We conclude that $\xi=1$ and $f^{**}_y(\bm{x})=p(y=i|\bm{x})$ holds.
\end{proof}

Next, we provide the definition of the (theoretical) ambiguity degree \cite{DBLP:journals/jmlr/CourST11} to establish the learnability of the PLL problem.
\begin{definition}\label{lemma-2}
Denote the distribution of triplet $(\bm{x},y,S)$ by $\hat{\mathcal{D}}_p$, We define the ambiguity degree as,
\begin{equation}
\begin{split}
\kappa=\sup_{(\bm{x},y,S)\sim \hat{\mathcal{D}}_p,p(\bm{x},y)>0,\bar{y}\ne y}P(\bar{y}\in S|\bm{x},y).
\end{split}
\end{equation}
\end{definition}
We say a data distribution satisfies the \emph{small ambiguity degree condition} if $\kappa<1$. This is a natural requirement. If $\kappa=1$, there exists at least one label pair that always co-occur, and thus, it is impossible to find the optimal hypothesis given partial-labeled data.

Now, we provide the main proof sketch for Theorem \ref{theorem-1}. Note that we always seek an optimal joint probability matrix before model training, which is mainly designed for empirical measures of the data samples. At a population level, we aim to search for an optimal probability measure that meets the marginal constraints and candidate constraints. Note that we aim to minimize $E(\bm{Q},\bm{P})=\KL(\bm{Q}||\bm{P})+H(\bm{Q})$ where $\text{KL}(\cdot)$ is the Kullback–Leibler divergence and $H(\cdot)$ is an entropy regularizer. This gives rise to our population risk,
\begin{equation}\label{risk-appendix}
\begin{split}
\mathcal{R}(f)=\inf_{\varpi\in\tilde{\Delta}}\mathbb{E}_{(\bm{x},S)\sim \mathcal{D}_p}\KL(f(\bm{x})||\varpi(y|\bm{x})).
\end{split}
\end{equation}
Here we omit the entropy term $H(\varpi)$ as it serves as a regularizer. As the training labels are categorical, we may assume $H(p(y|x))=0$, and thus, the infimum still holds. Otherwise, we can offset it by setting the smoothing parameter $\lambda=1$ in the Sinkhorn-Knopp approximation. When the pseudo-labels are fixed, this objective is exactly the cross-entropy loss whose target measure is $\varpi$.

To show the consistency, we first prove that $f^{**}$ is the minimizer of $\mathcal{R}(f)$. Substituting $f^{**}$ into $\mathcal{R}(f)$ and combining with Lemma \ref{lemma-1}, we obtain that,
\begin{equation}\label{risk-opt}
\begin{split}
\mathcal{R}(f^{**})&=\min_{\varpi\in\tilde{\Delta}}E_{(\bm{x},S)\sim \mathcal{D}_p}\KL(p(y|\bm{x})||\varpi(y|\bm{x}))=0.
\end{split}
\end{equation}
It is obvious that $\varpi=p(y|\bm{x})$ leads to the minimal KL-divergence.

On the other hand, we show that $f^{**}$ is the unique solution. We assume there exists another hypothesis $f'$ that minimizes Eq. (\ref{risk-appendix}) and holds a different prediction $y'\ne y$ from $f$ on at least one instance $(\bm{x},y,S)$. By the deterministic assumption, we have $\mathcal{R}(f')=0$, which holds only if $y'$ is invariably included in the candidate set $S$, \IE, $P_{\hat{\mathcal{D}}_p}(y'\in S|\bm{x},y)=1$. Clearly, it violates the small ambiguity degree condition and causes a contradiction. Ignoring all the null set where $p(\bm{x},y)=0$, we conclude that the minimizer $f^*$ of our population risk equals the fully-supervised $f^{**}$.

The above discussion indicates that our method poses good performance guarantees like existing PLL methods \cite{DBLP:conf/icml/LvXF0GS20,DBLP:conf/nips/FengL0X0G0S20}. A similar loss $\mathcal{R}(f)=\mathbb{E}_{(\bm{x},S)\sim \mathcal{D}_p}\min_{y\in S}-\log(f_y(\bm{x}))$ is also discussed in \cite{DBLP:conf/icml/LvXF0GS20}, which is known as \textit{minimal loss}. Its main difference from our objective is that we seek the best probability measure instead of an example-wise minimum. Our risk preserves the marginal constraint of the estimated pseudo-labels and is reasonably more favorable in the long-tailed learning setup.

\subsection{Derivation of the Sinkhorn-Knopp Iteration}\label{sk-derivation}
In this section, we briefly introduce the derivation of the Sinkhorn-Knopp algorithm to ensure the integrity of our work.

Recall that Eq. (1) is a standard linear programming (LP) problem, and can be solved in polynomial time. But, considering the high volume of data points as well as potential large class space, common LP solvers typically become time-consuming. To this end, we investigate a smoothed version of this optimization problem for fast approximation. Technically, we further add a negative entropy regularization term to obtain the following objective,
\begin{equation}\label{ot-obj-sink}
\begin{split}
\min_{\bm{Q}\in\Delta} \langle\bm{Q},-\log(\bm{P})\rangle-\frac{1}{\lambda} H(\bm{Q}).
\end{split}
\end{equation}
This entropy regularizer is derived from an optimization perspective and is different from the one in our derivation of $E(\bm{Q},\bm{P})=\KL(\bm{Q}||\bm{P})+H(\bm{Q})$. The resulting objective becomes smoothing and convex, and thus, can be efficiently solved.

It is worth pointing out that in the PLL problem, we have a special constraint that no probability mass is assigned outside the candidate label sets. To avoid introducing another set of annoying Lagrange multipliers, we transform this constraint to infinity costs on non-candidate labels. That is, we define a cost matrix $\bm{T}$ such that $T_{ij}=-\log(p_{ij})\mathbb{I}(j\in S_i)+\inf\mathbb{I}(j\notin S_i)$ and here we assume $0\cdot\inf=0$. Now, we obtain the Lagrangian of the new optimization problem,
\begin{equation}
\begin{split}
\mathcal{L}(\bm{Q},\bm{u},\bm{v})&=\langle\bm{Q},\bm{T}\rangle-\frac{1}{\lambda} H(\bm{Q})+\bm{u}^\top(\bm{Q}\bm{1}_L-\bm{c})+\bm{v}^\top(\bm{Q}^\top\bm{1}_n-\bm{r}),
\end{split}
\end{equation}
where $\bm{u},\bm{v}$ are Lagrange multipliers. As the original optimization is convex, the solution has to satisfy the Karush-Khun-Tucker (KKT) conditions. Therefore, we have the following equations,
\begin{equation}
\begin{split}
\frac{\partial\mathcal{L}(\bm{Q},\bm{u},\bm{v})}{q_{ij}}&=T_{ij}+\frac{1}{\lambda}(\log(q_{ij})+1)+u_i+v_j=0.\\
\end{split}
\end{equation}
Let $\bm{M}$ be a matrix such that $m_{ij}=e^{-\lambda T_{ij}}=p_{ij}^\lambda \mathbb{I}(j\in S_i)$. We can get that,
\begin{equation}
\begin{split}
q_{ij}=e^{-\frac{1}{2}-\lambda u_i}m_{ij} e^{-\frac{1}{2}-\lambda v_j}.
\end{split}
\end{equation}
Thereby, solving the primal problem in Eq. (\ref{ot-obj-sink}) equals to finding the multipliers $\bm{u}$ and $\bm{v}$. Again, this is equivalent to get another two vectors $\bm{\alpha}\in\mathbb{R}^n,\bm{\beta}\in\mathbb{R}^L$ such that $\alpha_i=e^{-\frac{1}{2}-\lambda u_i}$ and $\beta_j=e^{-\frac{1}{2}-\lambda v_j}$, which are also known as scaling coefficients vectors. Then, we can get the following equation,
\begin{equation}\label{Q_original}
\begin{split}
\bm{Q}=\text{diag}(\bm{\alpha})\bm{M}\text{diag}(\bm{\beta}).
\end{split}
\end{equation}
Recall that we enforce $\bm{Q}$ to meet the following constraints,
\begin{equation}
\begin{split}
\bm{Q}\bm{1}_L&=\text{diag}(\bm{\alpha})\bm{M}\bm{\beta}=\bm{c},\\
\bm{Q}^\top\bm{1}_n&=\text{diag}(\bm{\beta})\bm{M}^\top\bm{\alpha}=\bm{r}.
\end{split}
\end{equation}
This gives rise to an alternative coordinate descent algorithm for updating the scaling coefficients,
\begin{equation}\label{SK-it}
\begin{split}
\bm{\alpha}\leftarrow\bm{c}./(\bm{M}\bm{\beta}),\quad \bm{\beta}\leftarrow \bm{r}./(\bm{M}^\top\bm{\alpha}).
\end{split}
\end{equation}
It is also known as the Sinkhorn-Knopp fixed point iteration. We refer the readers to \cite{DBLP:journals/siammax/Knight08} for its convergence properties. Empirically, we find that setting $\lambda=3$ and running $50$ steps are enough to get a satisfactory solution. The final step is to take a re-scaled the minimizer $n\bm{Q}$ in Eq. \ref{Q_original}, since $\bm{Q}$ serves as a joint probability matrix and the posterior is calculated by $\frac{p(\bm{x},y)}{p(\bm{x})}=np(\bm{x},y)$. Without loss of generality, we slightly abuse the notation $\bm{Q}$ to denote the obtained pseudo-labels.

The core of our algorithm is to replace our zero constraints on non-candidates with infinity cost, which equals defining $\log(0)=-\inf$. These infinity values are then mapped back to $0$s in the Sinkhorn-Knopp iteration, ensuring the feasibility of the calculation.

\section{Practical Implementation}
In this section, we describe several details of the practical implementation of SoLar.

\subsection{Details of Representation Enhancement}\label{rep-det}
A recent work PiCO \cite{wang2022pico} has shown that existing PLL methods are typically trapped in a \textit{disambiguation-representation dilemma}, where the low-quality representation and imperfect label disambiguation mutually deteriorate each other. It can be even worse in the imbalanced setup because of low-shot examples on the tail. While PiCO pioneers the contrastive learning technique in PLL for superior performance, we show this can be achieved by a much simpler design. In contrast to the complicated network architecture in PiCO \cite{wang2022pico}, we involve consistency training regularizer along with Mixup augmentation to improve the representation quality.

\textbf{Consistency Regularization. } Recently, consistency regularization (CR) has been widely applied in weakly-supervised learning \cite{DBLP:conf/nips/SohnBCZZRCKL20}, which assumes that a classifier should produce similar class probability for a sample and its local augmented copies. Motivated by this, we also incorporated CR into PLL. Given an image $\bm{x}_i$, we adopt two data augmentation modules SimAugment \cite{DBLP:conf/nips/KhoslaTWSTIMLK20} and RandAugment \cite{DBLP:journals/corr/abs-1909-13719} to obtain a weakly augmented image $\bm{x}_i^w$ and strongly augmented counterpart $\bm{x}_i^s$. During training, the weak one is utilized to produce pseudo-labels by our Sinkhorn label refinery procedure as well as selecting the reliable subset. Then, we define the the CR loss $l_\text{cr}=-\sum\nolimits_{j=1}^Lq_{ij}\log(f_j(\bm{x}_{i}^s))$ that is the cross-entropy loss on $\bm{x}_{i}^s$ and $\bm{q}_i$.

\textbf{Mixup. } We further incorporate mixup training for improved performance. Given a pair of weakly-augmented examples $\bm{x}_i^w$ and $\bm{x}_j^w$ in the reliable set, we create a virtual training example by linearly interpolating both,
\begin{equation}
\begin{split}
\bm{x}^m=\sigma\bm{x}_i^w +(1-\sigma)\bm{x}_j^w,\\
\bm{q}^m=\sigma\bm{q}_i^w +(1-\sigma)\bm{q}_j^w,
\end{split}
\end{equation}
where $\sigma\sim\text{Beta}(\varsigma,\varsigma)$ and we simply set $\varsigma=4$ without further tuning. Similarly, we define the mixup loss $l_\text{mix}$ as the cross-entropy loss on $\bm{x}^m$ and $\bm{q}^m$.

In our implementation, we add CR and Mixup on reliable examples only. For the remaining examples, since their pseudo-labels are unreliable, we train them with the classical re-normalized PLL loss $l_\text{rn}$ \cite{DBLP:conf/icml/LvXF0GS20}. The final loss is defined by,
\begin{equation}\label{final-loss}
\begin{split}
l_\text{cls}=\eta (l_\text{ce}+l_\text{cr}+l_\text{mix})+(1-\eta)l_\text{rn},
\end{split}
\end{equation}
where $l_\text{ce}$ is the original classification loss of weakly-augmented instances. We linearly ramp up $\eta$ from $0$ to $0.9$ in the first $50$ epochs, which helps warm up the classifiers.

\subsection{Relaxed Solution for the Sinkhorn-Knopp Algorithm}
Recall that we start from a uniform class prior. Empirically, most of our experiments run very well. But, in some cases, it can result in an unsolvable optimal transport objective. For example, if there are only a few examples holding one specific label as a candidate $(\ll\frac{n}{L})$, we have no hope to constrain the sum of pseudo-labels on this label to be the average number of instances. Thus, the Sinkhorn-Knopp iteration diverges.

In our implementation, once divergence occurs, we return a relaxed solution to the optimal transport problem. Concretely, we modify the matrix $\bm{M}$ by adding a small number $\epsilon=1e^{-5}$ on its zero entries. In other words, we regard all negative labels as potential candidates but assign them very large costs. Now, $\bm{M}$ is an element-wise positive matrix, and the Sinkhorn fixed iteration process guarantees to converge. We can re-run the Sinkhorn-Knopp iteration to obtain a relaxed solution. Finally, we set the non-candidate labels as zero again, as they should not be assigned any probability mass. By then, our algorithm can safely run with a uniform distribution. This procedure is easy to implement and increases only $1\times$ computation at most. Moreover, as the class prior is estimated better and better, we typically do not need this relaxed solution anymore.

In practice, it is not likely that we are fully unknowledgeable of the class distribution. Thus, we can also use a good initialization to avoid the aforesaid problem. Empirically, we find that initializing with the ratio of candidate label number is also a good choice for SoLar.

\section{Additional Experimental Results}\label{extra-exp}
In this section, we report the additional empirical results of our proposed SoLar framework. All experiments are conducted on a workstation with 8 NVIDIA A6000 GPUs. The licenses of our employed datasets are unknown (non-commercial).

\subsection{Experimental Setups on Fine-grained Datasets and SUN397}\label{setup-fg-sun}
In the sequel, we show the full experimental setups on fine-grained classification datasets. In particular, on CUB200-LT, we set the batch size as 128, and the length of the queue for Sinkhorn acceleration as 8 times batch size. We train the model for 500 epochs without the pre-estimation training stage. The ratio parameter $\rho$ ramps up from 0.2 to 0.5 in the first 50 epochs. Other hyper-parameters are the same as our default setting. On CIFAR100-H-LT, we simply adopt the default parameter configurations. The baselines are also fine-tuned to achieve their best results.

For the SUN397 dataset, we set the batch size as 128, and the queue length for Sinkhorn acceleration as 16 times batch size. We train the model for 20/200 epochs for distribution estimation and regular training. The gamma value is set as $0.1$ at the pre-estimation stage. Other parameters are the same as the CUB200-LT dataset. As the SUN397 has a much larger scale, we calculate the empirical label distribution $\bm{z}$ by recording batch-wise statistics during training. We find this on-the-fly counting strategy works as well as the default setup but is much faster. As reported in Table \ref{tab:sun397-full}, SoLar retains substantial performance advantages in different ambiguity degrees.

\begin{table}[!t]
    \caption{Full results on the SUN397 dataset. }
    \addtolength{\tabcolsep}{-1pt}
    \centering
    \small
    \begin{tabular}{p{3cm}<{\centering}| p{0.9cm}<{\centering}p{0.9cm}<{\centering}p{0.9cm}<{\centering}p{0.9cm}<{\centering}p{0.9cm}<{\centering}p{0.9cm}<{\centering}p{0.9cm}<{\centering}p{0.9cm}<{\centering}}
    \toprule
        \multirow{2.5}{*}{Methods} & \multicolumn{4}{c}{SoLar ($\phi=0.05$)} & \multicolumn{4}{c}{SoLar ($\phi=0.1$)}  \\
        \cmidrule(lr){2-5}\cmidrule(lr){6-9}
        & All & Many & Medium & Few & All & Many & Medium & Few  \\ \cmidrule(lr){1-1}\cmidrule(lr){2-5}\cmidrule(lr){6-9}
        PRODEN & 51.14  & 76.98  & 58.17  & 17.61 &  35.96  & 76.62  & 29.48  & 1.40  \\
        PiCO & 29.54  & 57.91  & 21.27  & 9.33 & 12.22  & 24.17  & 9.23  & 3.18 \\
        \rowcolor{Gray} SoLar & \textbf{61.58}  & \textbf{77.36}  & \textbf{62.59}  & \textbf{44.49} & \textbf{55.64}  & \textbf{76.78}  & \textbf{57.87}  & \textbf{31.86} \\
    \bottomrule
    \end{tabular}
    \label{tab:sun397-full}
\end{table}

\begin{table}[!t]
    \caption{Accuracy comparisons of SoLar and SoLar with logits adjustment (SoLar-LA). The best results are marked in bold and the second-best marked in underline. }
    \addtolength{\tabcolsep}{-1pt}
    \centering
    \small
    \begin{tabular}{p{3cm}<{\centering}| p{0.9cm}<{\centering}p{0.9cm}<{\centering}p{0.9cm}<{\centering}p{0.9cm}<{\centering}p{0.9cm}<{\centering}p{0.9cm}<{\centering}p{0.9cm}<{\centering}p{0.9cm}<{\centering}}
    \toprule
        \multirow{2.5}{*}{Methods} & \multicolumn{4}{c}{CIFAR10-LT ($\phi=0.5,\gamma=100$)} & \multicolumn{4}{c}{CIFAR100-LT ($\phi=0.1,\gamma=20$)}  \\
        \cmidrule(lr){2-5}\cmidrule(lr){6-9}
        & All & Many & Medium & Few & All & Many & Medium & Few  \\ \cmidrule(lr){1-1}\cmidrule(lr){2-5}\cmidrule(lr){6-9}
        SoLar & 74.16  & \textbf{96.50}  & 76.01  & 50.16 & 53.03  & \textbf{74.33}  & 54.09  & 30.62 \\
        \midrule
        SoLar-LA ($\zeta=0.5$) & \underline{78.23}  & \underline{95.91}  & \underline{78.70}  & 59.94 & 54.17  & \underline{73.05}  & 55.75  & 33.66 \\
        SoLar-LA ($\zeta=1$) & \textbf{80.88}  & 94.35  & \textbf{79.76}  & \underline{68.90} & \textbf{54.82}  & 71.08  & \underline{56.90}  & \underline{36.40}\\
        SoLar-LA ($\zeta=2$) & 72.48  & 52.98  & 78.68  & \textbf{83.71} & \underline{54.37}  & 64.45  & \textbf{57.70}  & \textbf{40.86}\\
    \bottomrule
    \end{tabular}
    \label{tab:logit-adj}
    \vskip -0.1in
\end{table}

\subsection{Full Results of SoLar with Logit Adjustment}
In Table \ref{tab:logit-adj}, we report the full results of SoLar-LA with varying $\zeta$ values on CIFAR10-LT and CIFAR100-LT. On CIFAR100-LT, SoLar-LA still outperforms SoLar with a proper $\zeta$. The results demonstrate that SoLar achieves promising disambiguation ability in the imbalanced PLL setup. Given well-disambiguated data, SoLar makes it possible to apply off-the-shelf LTL methods for further improvements.

\begin{wrapfigure}{R}{0.48\textwidth}
\begin{minipage}{0.48\textwidth}
\vskip -0.3in
\begin{table}[H]
\addtolength{\tabcolsep}{-1pt}
\centering
\small
\caption{\small Total running time (in hours) of the Sinkhorn-Knopp iterations and model training on CIFAR10-LT ($\gamma=100$) and CIFAR100-LT ($\gamma=20$). }
\label{tab:time-cost}
\begin{tabular}{c|c|cc}
  \toprule
          Dataset & CIFAR10-LT & CIFAR100-LT \\
    \midrule
          Sinkhorn-Knopp    &  0.08  & 0.10  \\
          Model Training    &  1.81  & 2.33  \\
        \bottomrule
\end{tabular}
\end{table}
\end{minipage}
\vskip -0.1in
\end{wrapfigure}

\subsection{Running Time of the Sinkhorn-Knopp Algorithm}\label{time-cost}
As we mentioned in Section \ref{sinkhorn-label-refinery}, our Sinkhorn-Knopp algorithm can be efficiently implemented on GPUs. In Table \ref{tab:time-cost}, we report the total running time ($1000$ epochs) of the Sinkhorn-Knopp iterations as well as model training based on our implementation. We run SoLar with our default parameter configurations and evaluate using one NVIDIA A6000 GPU. During training, we maintain a queue of size $64\times256$ to store classifier predictions in previous $64$ steps. Then, we concatenate the prediction in the current batch with the queue to run the Sinkhorn-Knopp algorithm. It can be shown that the Sinkhorn-Knopp iterations take less than $1/10$ time cost than regular model training (including forward pass and backward propagation). These results clearly validate the efficiency of our algorithm.

\begin{figure}[!t]
    \vspace{-0.2cm}
	\centering
	\subfigure[Ablation on $\rho$]{
		\includegraphics[width=0.3\columnwidth]{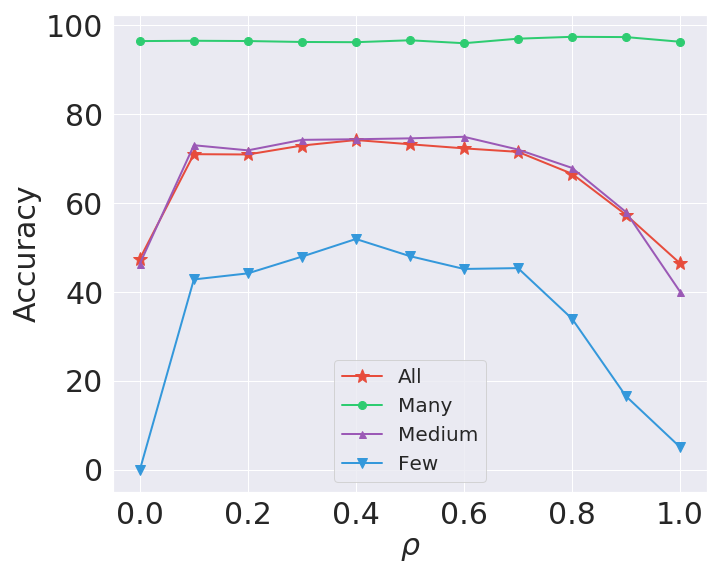}
	}\quad
	\subfigure[Ablation on $\lambda$]{
		\includegraphics[width=0.3\columnwidth]{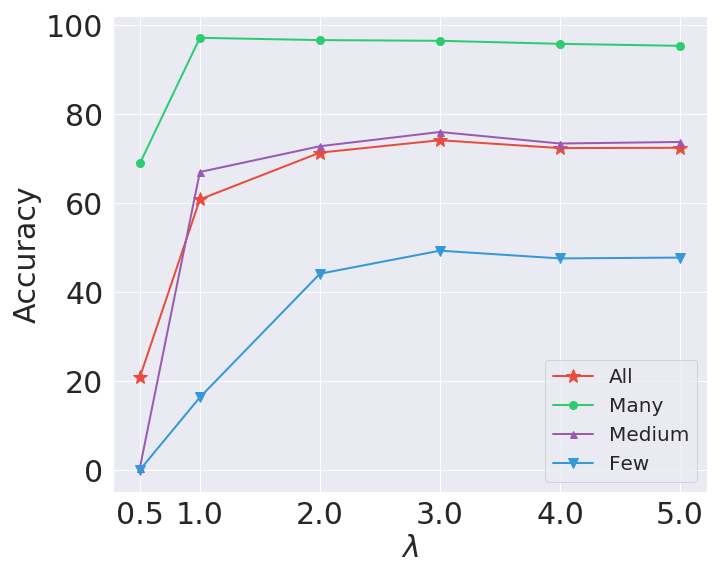}
	}\quad
	\subfigure[Non-Uniform Case]{
		\includegraphics[width=0.3\columnwidth]{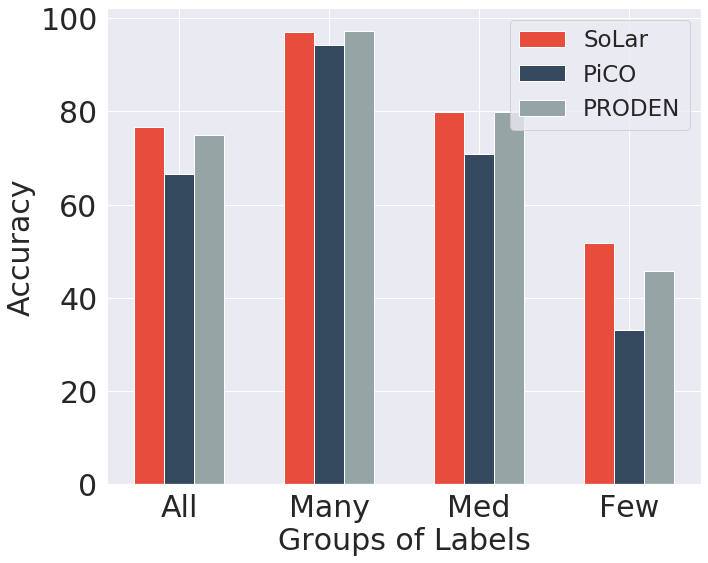}
	}
   \caption{\small (a) Performance of SoLar with varying $\rho$ on CIFAR10-LT ($\phi=0.5,\gamma=100$). (b) Performance of SoLar with varying $\lambda$ on CIFAR10-LT ($\phi=0.5,\gamma=100$). (c) Performance comparisons of SoLar and baselines on CIFAR10-LT with non-uniform generated candidate labels. }
   \label{fig:further-analysis-appen}
    \vskip -0.1in
\end{figure}

\subsection{More Ablation}\label{ablation-appendix}
\textbf{Effect of selection threshold $\rho$ and $\tau$.} We further investigate the effect of the small-loss selection ratio parameter $\rho$ and the high-confidence threshold $\tau$. Figure \ref{fig:further-analysis-appen} (a) shows the performance of SoLar with varying $\rho$ on CIFAR10-LT (without ramp-up). When $\rho=0$, SoLar simply selects high-confidence samples, which leads to an unsatisfactory performance on few-shot labels. The performance becomes much better as $\rho$ becomes larger and achieves the best when $\rho=0.4$. But, when $\rho$ becomes too large, the model tends to overfit unreliable labels. Empirically, we find that SoLar works well in a wide range of $\rho$ and $\rho\approx 0.5$ is a good choice.

\begin{wrapfigure}{R}{0.45\textwidth}
\begin{minipage}{0.45\textwidth}
\begin{table}[H]
\addtolength{\tabcolsep}{-2pt}
\centering
\small
\vskip -0.3in
\caption{\small Performance of SoLar with varying $\tau$ on CIFAR10-LT ($\phi=0.5,\gamma=100$). }
\vskip -0.05in
\label{tab:tau}
    \begin{tabular}{c| c c c c }
    \toprule
        $\tau$ Values & All & Many & Medium & Few   \\ \cmidrule(lr){1-1}\cmidrule(lr){2-5}
        0.9 & 64.31  & 96.32  & 65.96  & 30.10 \\
        0.95 & 69.00 & \underline{96.38} & 70.64 & 39.76  \\
        0.99 & \underline{74.16}  & \textbf{96.50}  & \textbf{76.01}  & \underline{49.34} \\
        1.0 & \textbf{74.49}  & 93.72  & \underline{75.08}  & \textbf{54.46} \\
    \bottomrule
  \end{tabular}
  \vskip -0.1in
\end{table}
\end{minipage}
\end{wrapfigure}

Table \ref{tab:tau} lists the result of SoLar with varying $\tau$ values. With a relatively small $\tau$, SoLar collects too many unconfident examples, which typically hurts the robustness of model training. When $\tau=0.99$, SoLar achieves rather promising results on different groups of labels. A surprising observation is SoLar without high-confidence selection, \IE, $\tau=1$, achieves better results than SoLar on few-shot labels. We find the reason is that those well-disambiguated examples on the majority classes are overlooked, which poses an effect of down-sampling and benefits learning on the minority classes. As a negative effect, this sacrifices the performance of many-shot labels, which leads to an interesting trade-off. We empirically set $\tau=0.99$ to retain relatively good performance on many-shot labels.

\textbf{Effect of Sinkhorn smooth parameter $\lambda$.} Next, we study the effect of the Sinkhorn smooth parameter $\lambda$. The results are shown in Figure \ref{fig:further-analysis-appen} (b). In general, a too small $\lambda$ results in poor label assignment as the optimal transport objective becomes hard to be resolved. With a too large $\lambda$, our objective is over-smoothed and thus the resultant solution may deviate from the true one, which slightly degrades the performance as well. We empirically found that $\lambda=3$ is a proper choice.

\begin{wrapfigure}{R}{0.45\textwidth}
\begin{minipage}{0.45\textwidth}
\begin{table}[H]
\addtolength{\tabcolsep}{-2pt}
\centering
\small
\vskip -0.3in
\caption{\small Performance of SoLar without training with unreliable examples. }
\vskip -0.05in
\label{tab:l_rn}
    \begin{tabular}{c| c c c c }
    \toprule
        Class Prior & All & Many & Medium & Few   \\ \cmidrule(lr){1-1}\cmidrule(lr){2-5}
        Oracle & 71.75 & 96.00 & 76.65 & 40.97 \\
        Estimated & 61.85 & 68.00 & 72.17 & 41.94   \\
    \bottomrule
  \end{tabular}
  \vskip -0.1in
\end{table}
\end{minipage}
\end{wrapfigure}

\textbf{The role of unreliable examples.} In our implementation, we enable unreliable examples to be trained with re-normalized PLL loss since their pseudo-labels can be noisy. This not only improves data utility but also serves as a warm-up mechanism for SoLar as we aim to train SoLar without knowledge of the true class prior. To see the role of unreliable examples, we also evaluate SoLar without $l_\text{rn}$ and the results are shown in Table \ref{tab:l_rn}. We observe that SoLar w/o $l_\text{rn}$ obtains favorable results on tail labels, even trained with only reliable examples. This verifies the importance of our distribution-preserving sample selection mechanism. Given the oracle class prior, SoLar w/o $l_\text{rn}$ obtains competitive performance and matches our main results. In practice, proper utilization of unreliable samples typically leads to more promising results.

\textbf{Ablation on representation learning.}
Here we ablate the contributions of two components in representation enhancement: mixup augmentation training and consistency regularization. Specifically, we compare SoLar with three variants: 1) \textit{SoLar w/o MU} which removes Mixup augmentation training; 2) \textit{SoLar w/o CR} which removes consistency regularization; 3) \textit{SoLar w/o MU+CR} which removes both Mixup and consistency parts; 4) \textit{PRODEN w/o MU+CR} is the PRODEN algorithm that removes Mixup and consistency parts as well. From Table \ref{tab:ablation-rep}, we can observe that both SoLar and PRODEN benefit from Mixup and consistency regularization techniques. In contrast to the relatively complicated contrastive learning modules in PiCO, which may be not directly suitable for the long-tailed setup, our simpler design can also alleviate the representation dilemma of PLL.

\subsection{Results with Non-Uniform Data Generation}
In reality, some labels may be more analogous to the ground-truth than others, and CIFAR100-H-LT is exactly one of the cases. Following \cite{wang2022pico,DBLP:conf/icml/WenCHL0L21}, we further test SoLar with a non-uniform data generation process, with the following flipping matrix:
\begin{equation}
\begin{split}
\begin{bmatrix}
1 & 0.5 & 0.4 & 0.3 & 0.2 & 0.1 & 0 & \cdots & 0\\
0 & 1 & 0.5 & 0.4 & 0.3 & 0.2 & 0.1 & \cdots & 0\\
\vdots&  & & & \cdots& & & & \vdots \\
0.5 & 0.4 & 0.3 & 0.2 & 0.1 & 0 & 0 & \cdots & 1\\
\end{bmatrix},
\end{split}
\end{equation}
where each entry denotes the probability of a label being a candidate. From Figure \ref{fig:further-analysis-appen} (c), we evaluate SoLar on CIFAR10-LT with an imbalance ratio $\gamma=100$. It can be observed that SoLar still outperforms the baselines, which further validates its strong disambiguation ability.

\begin{table}[!t]
    \caption{Ablation results on representation learning techniques.}
    \small
    \addtolength{\tabcolsep}{1pt}
    \centering
    \begin{tabular}{p{3.5cm}<{\centering}| p{0.7cm}<{\centering}  p{0.7cm}<{\centering}  p{0.7cm}<{\centering}  p{0.7cm}<{\centering}  p{0.7cm}<{\centering} p{0.7cm}<{\centering} p{0.7cm}<{\centering} p{0.7cm}<{\centering}}
    \toprule
        \multirow{2.5}{*}{Ablation} & \multicolumn{4}{c}{CIFAR10-LT} & \multicolumn{4}{c}{CIFAR100-LT}  \\
        \cmidrule(lr){2-5}\cmidrule(lr){6-9}
        & All & Many & Med. & Few & All & Many & Med. & Few  \\ \cmidrule(lr){1-1}\cmidrule(lr){2-5}\cmidrule(lr){6-9}
        \rowcolor{Gray} SoLar & 74.16 & 96.50 & 76.01 & 49.34 & 53.03  & 74.33  & 54.09  & 30.62   \\
        \cmidrule(lr){1-1}\cmidrule(lr){2-5}\cmidrule(lr){6-9}
        CC w/o MU+CR & 36.98 & 79.38  & 30.63  & 3.04 & 25.96  & 47.08  & 23.82  & 7.04   \\
        PRODEN w/o MU+CR & 46.61  & 85.43  & 44.65  & 10.40  & 31.78  & 55.09  & 32.38  & 7.85   \\
        \cmidrule(lr){1-1}\cmidrule(lr){2-5}\cmidrule(lr){6-9}
        SoLar w/o MU & 69.40  & 92.77  & 72.75  & 41.58  & 47.41  & 71.45  & 48.06  & 22.70   \\
        SoLar w/o CR & 57.97  & 92.78  & 61.05  & 19.05  & 47.74  & 70.18  & 51.85  & 21.06   \\
        SoLar w/o MU+CR & 44.83  & 82.33  & 40.39  & 13.25  & 30.88  & 50.52  & 30.53  & 11.61  \\
    \bottomrule
    \end{tabular}
    \label{tab:ablation-rep}
\end{table}

\subsection{Results on Real-world Partial-Label Learning Datasets}\label{exp-realworld}
In this section, we test the performance of SoLar on four classical real-world datasets\footnote{\href{http://palm.seu.edu.cn/zhangml/Resources.htm\#data}{http://palm.seu.edu.cn/zhangml/Resources.htm\#data}}, including Lost, Bird Song, Soccer Player and Yahoo!News. As shown in Table \ref{tbl:realworld-stats}, these datasets are naturally imbalanced, which highlights the motivation of our work. In particular, the Soccer Player dataset has an extremely severe imbalanced ratio of 954.33. Thus, the evaluation protocol of previous works, \IE, uniformly splitting a testing set, is unrealistic on these real-world datasets. To this end, we propose a (roughly) balanced testing set sampling rule as follows: i) if one label is associated with $\ge200$ data points, we uniformly select 100 samples for testing; ii) otherwise, we uniformly sample half of the data for testing. For performance comparisons, we additionally compare SoLar with two more PLL methods that are tailored for tabular data: IPAL \cite{DBLP:conf/ijcai/ZhangY15a} is a graph-based PLL method that propagates candidate labels to recover label confidences; PLDA \cite{wang2022partial} is a feature-selection-based method that maximizes the mutual-information-based dependency between features and labels and we choose PL-SVM \cite{DBLP:conf/kdd/NguyenC08} as the base learner. We disable mixup training and consistency regularization as they are not applicable to tabular data.
As shown in Table \ref{tab:ablation-real-world}, SoLar achieves comparable or better performance to all the baselines under the conventional uniform testing set splitting setup. When the testing set is roughly balanced, SoLar obtains the best performance on all four datasets. These results further highlight the superiority of SoLar on the imbalanced PLL problem.

\begin{table*}[!t]
\centering
\scriptsize\addtolength{\tabcolsep}{2pt}
\small
\caption{Characteristics of the real-world partial label datasets. \#Avg. Cand. indicates the average number of candidate labels per sample. Note that some labels in the Lost dataset have no sample at all and hence, we calculate the imbalanced ratio by using the non-zero minimum label count. }
\label{tbl:realworld-stats}
\begin{tabular}{c|ccc|cc}
\toprule
\textbf{Datasets}& \#Examples & \#Features & \#Labels & \#Avg. Cand.$^\dag$ & Imb. Ratio\\
\midrule
BirdSong & 4,998 & 38 & 13  & 2.18 & 11.33 \\
Lost & 1,122 & 108 & 16     & 2.23 & 40.00\\
Soccer Player &17,472 & 279 & 171 & 2.09& 954.33\\
Yahoo! News & 22,991 & 163  & 219 & 1.91 & 308.79\\
\bottomrule
\end{tabular}
\end{table*}

\begin{table}[!t]
    \caption{Performance comparisons on real-world partial-label learning datasets. }
    \small
    \addtolength{\tabcolsep}{1pt}
    \centering
    \begin{tabular}{p{2cm}<{\centering} | p{2cm}<{\centering} p{2cm}<{\centering} p{2cm}<{\centering} p{2cm}<{\centering} }
    \toprule
        Methods & Lost & BirdSong & Soccer Player & Yahoo!News \\
        \cmidrule(lr){1-1}\cmidrule(lr){2-5}

        & \multicolumn{4}{c}{Uniform Testing Set}\\
        \cmidrule(lr){1-1}\cmidrule(lr){2-5}

        \rowcolor{Gray} SoLar & 77.86\tiny{$\pm$6.36} & \textbf{72.05}\tiny{$\pm$1.76} & \textbf{57.94}\tiny{$\pm$1.13} & 67.62\tiny{$\pm$0.64}\\
        VALEN & 74.11\tiny{$\pm$4.14} & 71.59\tiny{$\pm$2.08} & 57.16\tiny{$\pm$0.62} & 67.93\tiny{$\pm$0.30}\\
        PRODEN & \textbf{77.98}\tiny{$\pm$5.83} & 71.81\tiny{$\pm$1.52} & 57.12\tiny{$\pm$1.28} & 67.87\tiny{$\pm$0.87}\\
        CC & 77.85\tiny{$\pm$5.83} & 71.86\tiny{$\pm$1.94} & 56.44\tiny{$\pm$0.90} & \textbf{67.96}\tiny{$\pm$0.90}\\
        IPAL & 72.50\tiny{$\pm$2.92} & 70.28\tiny{$\pm$1.33} & 54.79\tiny{$\pm$1.37} & 66.50\tiny{$\pm$1.05}\\
        PLDA & 66.07\tiny{$\pm$1.89} & 67.68\tiny{$\pm$1.94} & 50.26\tiny{$\pm$0.46} & 53.66\tiny{$\pm$0.99}\\

        \cmidrule(lr){1-1}\cmidrule(lr){2-5}
         & \multicolumn{4}{c}{(Roughly) Balanced Testing Set}\\
        \cmidrule(lr){1-1}\cmidrule(lr){2-5}

        \rowcolor{Gray} SoLar & \textbf{70.56}\tiny{$\pm$3.25} & \textbf{68.72}\tiny{$\pm$1.17} & \textbf{24.97}\tiny{$\pm$0.68} & \textbf{58.18}\tiny{$\pm$0.68}\\
        VALEN & 60.95\tiny{$\pm$2.71} & 67.49\tiny{$\pm$1.22} & 20.56\tiny{$\pm$0.79} & 56.30\tiny{$\pm$0.70}\\
        PRODEN & 68.85\tiny{$\pm$4.17} & 67.72\tiny{$\pm$1.42} & 24.22\tiny{$\pm$0.88} & 55.98\tiny{$\pm$0.67}\\
        CC & 68.18\tiny{$\pm$3.97} & 67.78\tiny{$\pm$1.35} & 23.84\tiny{$\pm$1.01} & 56.01\tiny{$\pm$0.72}\\
        IPAL & 64.35\tiny{$\pm$2.50} & 66.82\tiny{$\pm$1.49} & 11.34\tiny{$\pm$0.23} & 54.90\tiny{$\pm$0.53}\\
        PLDA & 54.28\tiny{$\pm$4.77} & 53.24\tiny{$\pm$2.08} & 16.17\tiny{$\pm$0.90} & 25.50\tiny{$\pm$0.50}\\

    \bottomrule
    \end{tabular}
    \label{tab:ablation-real-world}
\end{table}

\section{Societal Impact}\label{societal-impact}
In this section, we briefly discuss several societal impacts of our work. The most obvious merit of our study is to reduce the cost of annotation by enabling coarse-grained labeling. This is a two-edged sword for the community. On the one hand, non-expert annotators can be employed for crowdsourcing labeling. On the other hand, if the partial-label learning paradigm is widely applied, the need for precisely annotated data would be significantly reduced, which may cause potential employment destruction as a consequence of reducing the need for human annotators. Another potential application of our method is data privacy. For instance, we may ask respondents to answer some private information when collecting some survey data. The candidate set-style labeling enables the respondents to exclude several wrong answers, which would be more privacy-friendly.

\end{document}